\renewcommand{\arraystretch}{1.3} 
\newtheorem{theorem}{Theorem}
\newtheorem{proof}{Proof}
\newtheorem{proposition}{Proposition}
\begin{document}

\title{Manifold Constraint Regularization for Remote Sensing Image Generation}

\author{Xingzhe~Su\IEEEauthorrefmark{1},
        Changwen~Zheng\IEEEauthorrefmark{1},
           Wenwen~Qiang,
           Fengge~Wu,
           Junsuo~Zhao,    Fuchun~Sun,~\IEEEmembership{Fellow,~IEEE},
Hui~Xiong,~\IEEEmembership{Fellow,~IEEE}
           

\IEEEcompsocitemizethanks{
\IEEEcompsocthanksitem \IEEEauthorrefmark{1} {These authors contributed equally.}
\IEEEcompsocthanksitem Xingzhe Su, Changwen Zheng, Wenwen Qiang, Fengge Wu and Junsuo Zhao are with the National Key Laboratory of Space Integrated Information System, Institute of Software Chinese Academy of Sciences, University of Chinese Academy of Sciences, Beijing, China. E-mail: \{xingzhe2018, changwen, qiangwenwen, fengge, junsuo\}@iscas.ac.cn.
\IEEEcompsocthanksitem Fuchun Sun is with the National Key Laboratory of Space Integrated Information System, Department of Computer Science and Technology, Tsinghua University, Beijing, China. E-mail: fcsun@tsinghua.edu.cn.
\IEEEcompsocthanksitem Hui Xiong is with the Hong Kong University of Science and Technology, China. E-mail: xionghui@ust.hk.
\IEEEcompsocthanksitem Corresponding author: Wenwen Qiang (qiangwenwen@iscas.ac.cn).
  }
}

\markboth{Journal of \LaTeX\ Class Files,~Vol.~14, No.~8, March~2024}%
{Shell \MakeLowercase{\textit{et al.}}: A Sample Article Using IEEEtran.cls for IEEE Journals}

\IEEEpubid{0000--0000/00\$00.00~\copyright~2024 IEEE}

\maketitle

\begin{abstract}
Generative Adversarial Networks (GANs) have shown notable accomplishments in remote sensing domain. However, this paper reveals that their performance on remote sensing images falls short when compared to their impressive results with natural images. This study identifies a previously overlooked issue: GANs exhibit a heightened susceptibility to overfitting on remote sensing images. 
To address this challenge, this paper analyzes the characteristics of remote sensing images and proposes manifold constraint regularization, a novel approach that tackles overfitting of GANs on remote sensing images for the first time. Our method includes a new measure for evaluating the structure of the data manifold. Leveraging this measure, we propose the manifold constraint regularization term, which not only alleviates the overfitting problem, but also promotes alignment between the generated and real data manifolds, leading to enhanced quality in the generated images. The effectiveness and versatility of this method have been corroborated through extensive validation on various RS datasets and GAN models. The proposed method not only enhances the quality of the generated images, reflected in a 3.13\% improvement in Frechet Inception Distance (FID) score, but also boosts the performance of the GANs on downstream tasks, evidenced by a 3.76\% increase in classification accuracy. The source code is available at \href{https://github.com/rootSue/Manifold-RSGAN}{https://github.com/rootSue/Manifold-RSGAN}.
\end{abstract}

\begin{IEEEkeywords}
Image Generation, Generative Adversarial Networks, Remote Sensing, Data Manifold.
\end{IEEEkeywords}

\section{Introduction}
\label{intro}

\IEEEPARstart{I}{n} recent years, the field of artificial intelligence has witnessed the emergence of Generative Adversarial Networks (GANs) \cite{10.5555/2969033.2969125}, a paradigm-shifting technology that has significantly advanced the capabilities of image generation. 
Among the myriad domains benefiting from this technology, remote sensing (RS) imagery stands out as a particularly promising area. Here, GANs have demonstrated their potential in data generation or augmentation \cite{MartaGAN,AttentionGAN,MFGAN, chen2021adversarial}, haze or cloud removal \cite{hu2020unsupervised, zhao2023remote,li2020thin, ma2023cloud}, and image super-resolution \cite{jiang2019edge,xiong2020improved,guo2023improved,song2023dbsagan}. 
In this paper, our primary focus is on the application of GANs to RGB images within the RS domain.

\begin{figure}[tpb]
\centering
\subfloat[]{\includegraphics[width=.5\columnwidth]{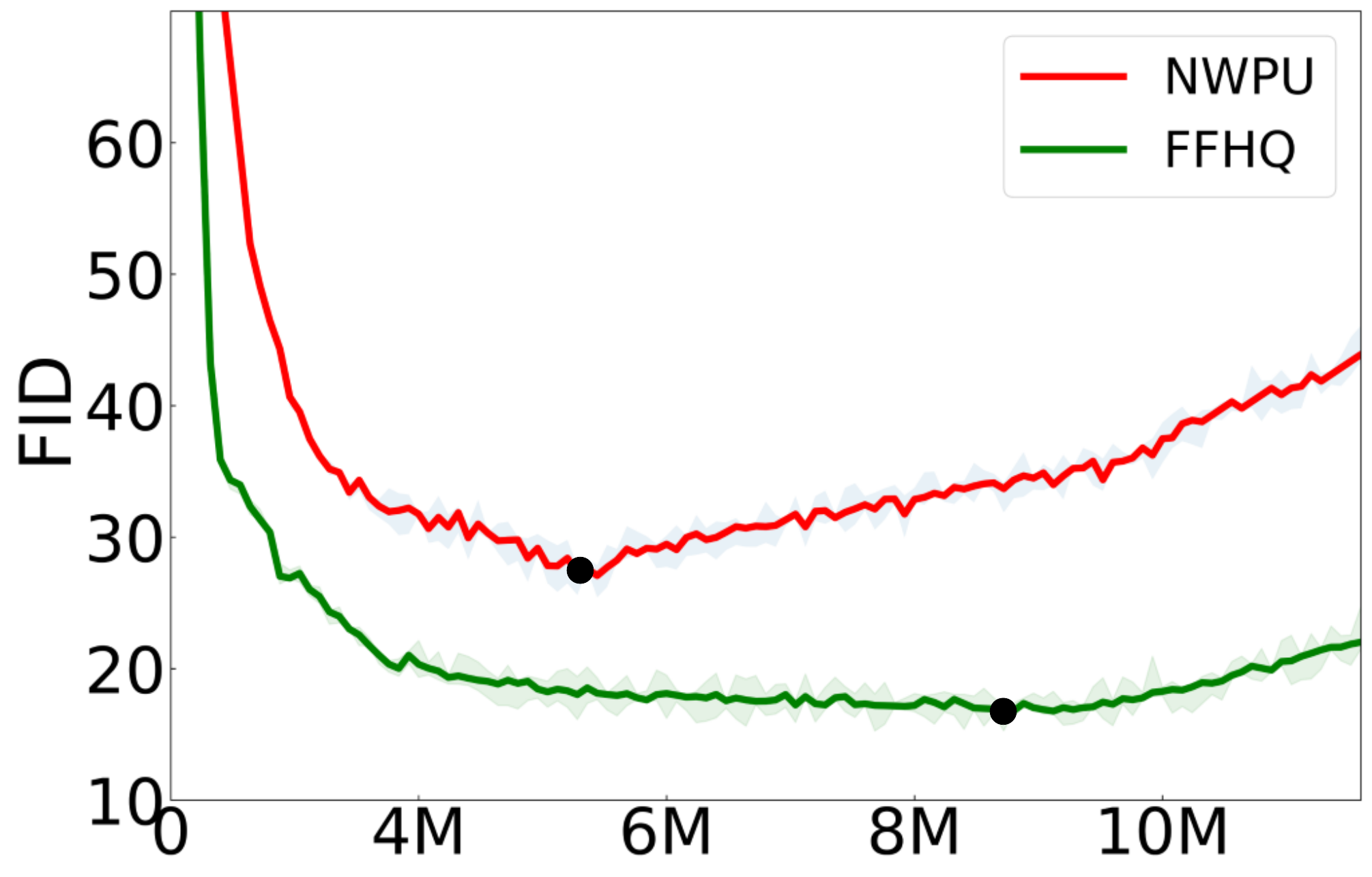}}
\subfloat[]{\includegraphics[width=.5\columnwidth]{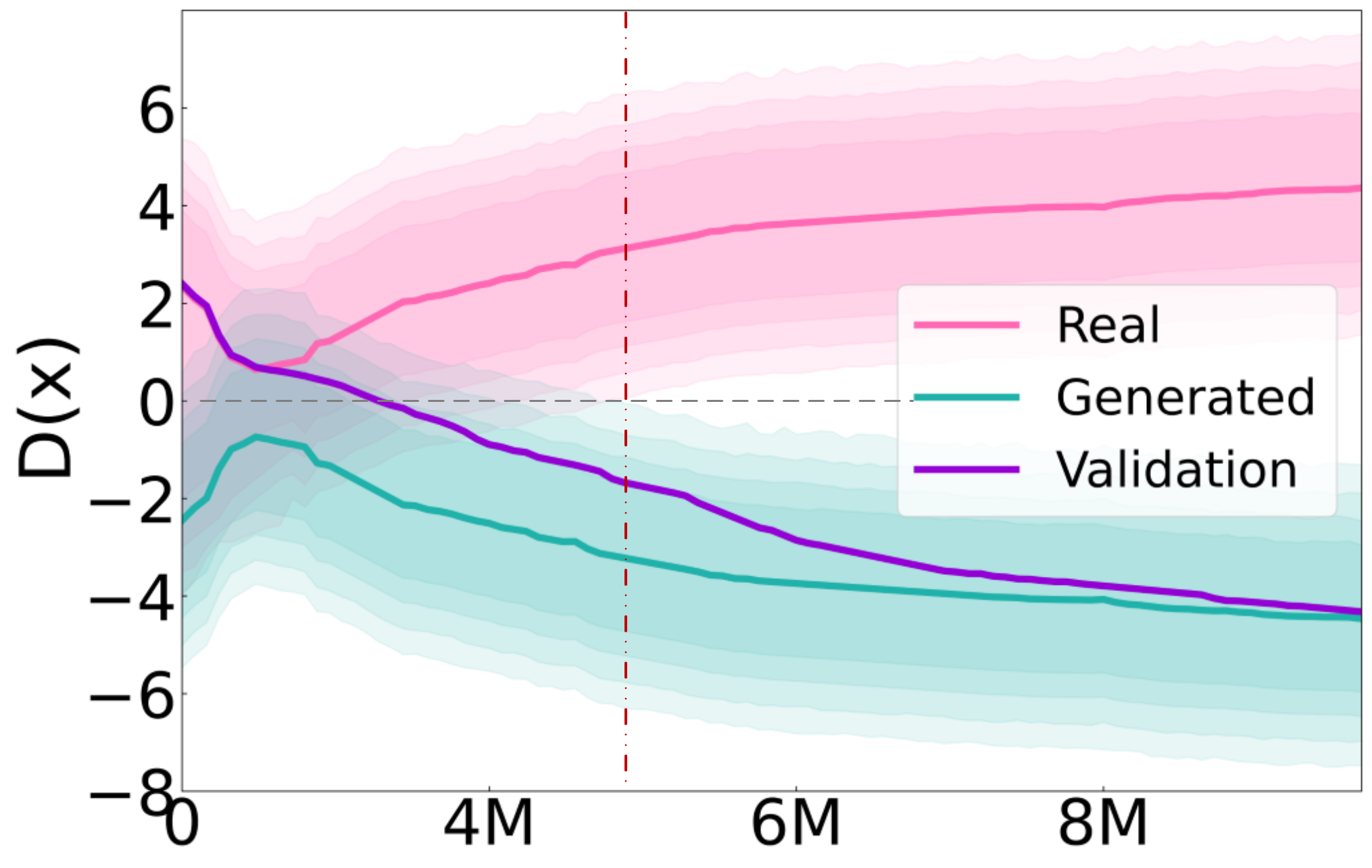}}
\caption{The horizontal axis indicates the training process (the number of real images shown to the discriminator). (a) Training curves of StyleGAN2 on NWPU and FFHQ datasets. We randomly sample 30,000 training images from these two datasets, respectively. The GAN model diverges earlier when trained on NWPU dataset. (b) The outputs of the discriminator during training on NWPU dataset. As training progresses, the validation set of real images is misclassified as generated images, highlighting the discriminator’s overfitting issue on NWPU dataset.}
\label{fig0-0}
\end{figure}


Current GAN methods achieve remarkable success with natural images. However, our research reveals a performance gap when applied to RS images, as shown in Fig.\ref{fig0-0}(a). The vertical axis is the Fréchet Inception Distance (FID) score \cite{FID}, a commonly-used image quality evaluation metric, with lower FID scores indicating higher image quality. We train the same GAN model on datasets of identical size: a natural image dataset (FFHQ \cite{StyleGAN}) and an RS image dataset (NWPU \cite{NWPU}). Interestingly, the model diverges earlier when trained on RS data compared to natural data (Fig. \ref{fig0-0}(a)), leading to lower quality generated RS images. To understand the reasons behind, we analyze the discriminator's outputs for real and generated images during training on NWPU dataset. Initially, the distributions of outputs for real and fake images overlap (Fig.\ref{fig0-0}(b)). However, as the discriminator becomes more confident, these distributions gradually drift apart. Additionally, the accuracy of the discriminator on a separate validation set decreases as training progresses.
Based on these observations, we conclude that GANs are more susceptible to overfitting on RS image dataset compared to natural image dataset. This overfitting leads to earlier divergence during training, ultimately resulting in lower quality generated RS images. More experiments are avaliable in Section \ref{pa}. As far as we know, this is the first study that identify the overfitting issue of GANs on RS data.

\IEEEpubidadjcol


To understand why GANs are more prone to overfitting on RS data, we analyze the key differences between RS and natural image datasets. Compared to natural images, RS images typically cover larger area, encompassing a wider variety of scenes and richer content. Consequently, we hypothesize that the intrinsic dimension of the RS dataset is larger than that of natural dataset. Highlighting a concept from \cite{narayanan2010sample}, learning a manifold requires a number of samples that grows exponentially with the manifold’s intrinsic dimension. Therefore, the discriminator might need more training images from the RS dataset, compared to natural images, to effectively capture the underlying data manifold. Our findings in Section \ref{pa} support this hypothesis, confirming that the intrinsic dimension of RS data is indeed higher than that of natural images. Consequently, with datasets of the same size, GANs are more likely to overfit on RS data compared to natural data.

The overfitting problem can significantly hinder the performance of GANs. When the discriminator becomes overfit to the training samples, its feedback to the generator becomes less meaningful, leading to training divergence, excessive memorization, and limited generalization \cite{stylegan2-ada,lecam}. Consequently, the performance of GANs on tasks like data augmentation could also be compromised. While current research on GANs for RS data focuses on modifying network architectures and loss functions to improve performance on downstream tasks \cite{dash2023review,jozdani2022review}, the issue of overfitting in the discriminator specifically for RS data is overlooked. This gap in research presents a unique opportunity to improve the overall effectiveness of GANs for RS applications. 

Motivated by our findings, we propose to leverage the real data manifold to mitigate the overfitting problem of GANs on RS data. Specifically, we propose the manifold constraint regularization method (MCR) and integrate the regularization term into GANs’ loss functions. MCR presents the discriminator with a more challenging task: capturing the underlying manifold of the real data, rather than simply memorizing the limited variations within the training dataset. This approach helps to eliminate complex model that performs well solely on training data and promotes model that performs well across the entire data manifold, thereby mitigating overfitting. Additionally, by minimizing MCR term, the generator aims to align its output manifold with that of the real images, capturing the authentic characteristics of the data and enhancing generative performance. Our methods is computationally efficient and integrates seamlessly within existing GAN architectures, requiring no major network modifications.

In summary, the contributions of this paper are:

\begin{itemize}
\item We identify a previously overlooked issue that the discriminator in GANs is prone to overfitting on RS image datasets, compared to natural images.

\item We empirically demonstrate that RS datasets have higher intrinsic dimension than natural datasets, which leads to the discriminator easily overfitting to the RS datasets.

\item To address overfitting, we propose MCR method, which consists of two components: feature distribution compactness measure and data manifold evaluation function. 

\item We theoretically prove that MCR can evaluate the distance between different manifolds. Extensive experiments across multiple RS datasets and GAN models verifies the effectiveness of our method. 
\end{itemize}

In the remainder of this paper, we commence with a review of related works in Section \ref{rw}, and analyze plausible reasons for the subpar results of GAN models on RS images in Section \ref{sec:3}. Then we provide details of our proposed method in Section \ref{method}, followed by theoretical analysis of our approach in Section \ref{sec:5}. We show the substantially-improved performances of our method over standard GAN models and related techniques for solving overfitting in Section \ref{sec:exp}. Finally, we wrap up with a discussion in Section \ref{sec:con}.

\section{Related Work}
\label{rw}
\subsection{Generative adversarial networks}

GANs are notorious for training instability and mode collapse. Various adversarial losses have been proposed to stabilize the training or improve the convergence of the GAN models \cite{10.5555/2969033.2969125, mroueh2017fisher,WGAN}. Additionally, numerous efforts have been made to address this issue using regularization methods \cite{R1,SNGAN,WGAN-GP,veegan}, or modifying network architectures \cite{biggan,StyleGAN,StyleGAN2,esser2021taming,karras2021alias,sauer2022stylegan,sauer2023stylegan}. Other than these problems, the overfitting of the discriminator is also a common challenge.

The overfitting problem occurs when the discriminator becomes overly complex with a large number of parameters, resulting in memorization of the training data rather than learning the underlying data distribution. 
To mitigate this issue, several strategies have been proposed, which can be divided into the following categories. The first category is data augmentation methods \cite{zhang2019consistency,APA,zhao2020differentiable,stylegan2-ada}, which utilizes traditional data augmentation methods, such as rotation and color transformation, to increase the amount of training data. The second type is regularization, which adds regularization term to the loss function of the discriminator \cite{lecam,yang2021data}, allowing it to learn more discriminative representations under limited training data. InsGen \cite{yang2021data} proposes a contrastive learning objective to enhance the adversarial loss in the few-shot generation setting. AdaptiveMix \cite{liu2023adaptivemix} narrows down the distance between hard samples and easy samples, where hard samples are regarded as the samples that are difficult for discriminator to classify. The third category is model architecture improvement. FastGAN \cite{liu2020towards} introduces a self-supervised discriminator and a Skip-Layer channel-wise Excitation module for efficient few-shot image synthesis. MoCA \cite{li2022prototype} proposes prototype-based memory modulation module to improve the generator network of a GAN. The proposed method in this paper falls into the second category by introducing manifold constraint regularization, a novel approach that addresses the overfitting problem of GANs on RS data from the manifold perspective for the first time.

Previous works \cite{park2017mmgan, li2018mrgan} introduce geometry constraints to GANs loss functions. These methods utilize statistical mean and radius to approximate the geometry of the real data, but they lack accurate constraints on the data manifold and may lose important geometrical information. Other approaches \cite{ni2022manifold,bang2021mggan,khayatkhoei2018disconnected} offer more precise control, but they require modifications to the network architecture. In contrast, our approach avoids the limitations of prior work by being applicable to any GAN model without requiring network architecture changes, offering both efficiency and effectiveness. 

\subsection{GAN in the RS field}

In this paper, our primary focus is on RGB images in the RS domain. Existing GAN models in the RS field can be categorized into two main types based on their applications.

The first type revolves around data generation or augmentation \cite{MartaGAN,AttentionGAN,MFGAN, chen2021adversarial,wu2023fully}. For instance, Lin \textit{et al.} introduced MARTAGAN \cite{MartaGAN}, marking the pioneering application of GANs to remote sensing images. MARTAGAN, an extension of DCGAN \cite{DCGAN}, introduce enhancements like a multi-feature layer in the discriminator and feature loss in the generator to improve image representations. This work demonstrated the potential of GANs to augment datasets and enhance unsupervised classification accuracy. Similarly, Yu \textit{et al.} propose Attention GANs \cite{AttentionGAN}, designed for unsupervised classification tasks, by integrating attention mechanisms into GANs to bolster the discriminator's representation capabilities. Furthermore, Wu \textit{et al.} \cite{wu2023fully} propose a comprehensive change detection framework utilizing GANs to tackle various RS change detection tasks. Their approach incorporates an image-to-image generator to capture spectral and spatial variations between multi-temporal images.

The second type pertains to GANs applied in image enhancement tasks, such as image super-resolution \cite{jiang2019edge,xiong2020improved,guo2023improved,song2023dbsagan} and cloud or haze removal \cite{hu2020unsupervised, zhao2023remote,li2020thin, ma2023cloud}. Examples include the work by Wu \textit{et al.} \cite{jiang2019edge}, which introduce a GAN-based edge-enhancement network designed to reconstruct high-resolution RS images, employing adversarial learning to restore edge details obscured by noise. Guo \textit{et al.} \cite{guo2023improved} have refined the Super Resolution Generative Adversarial Network \cite{ledig2017photo} by altering both the internal and external connections of the residual block and modifying the loss function to incorporate the Charbonnier penalty for improved performance. In the realm of haze removal, Hu \textit{et al.} \cite{hu2020unsupervised} propose edge-sharpening cycle-consistent adversarial network (ES-CCGAN), ubstituting the standard residual network with a dense convolutional network to enhance edge clarity in images. The edge-sharpening loss function of ES-CCGAN is designed to further recover clear ground-object edges. For cloud removal, Li \textit{et al.} \cite{li2020thin} develope a semi-supervised technique, CR-GAN-PM, merging GANs with a physical model to address cloud distortions in unpaired images from various regions.

Our contribution aligns with the first category, focusing on addressing the overfitting challenge in GANs on RS image generation. Our goal is to enhance the quality of generated images and improve the discriminator’s effectiveness, thereby elevating GAN performance in data generation and augmentation tasks. This paper marks a pioneering effort to specifically tackle issues related to the overfitting in the context of RS image generation, underscoring our novel approach to improving GAN applications within this domain.

\section{Preliminary Study and Analysis}
\label{sec:3}
In this section, we first introduce the basic framework of GANs. Then we explore the overfitting problem of GAN models on RS image generation tasks. Finally, we analyze plausible reasons about the poor results of the GAN models on RS images and derive the motivation for this paper.

\subsection{Preliminary GANs}

The GAN model aims to learn the distribution of training samples. Based on the idea of the zero-sum game, a GAN model consists of a generator $G$ and a discriminator $D$. The generator aims to generate realistic samples to fool the discriminator, while the discriminator tries to distinguish between real and fake samples. When the model reaches the final equilibrium point, the generator will model the target distribution and produce counterfeit samples, which the discriminator will fail to discern. Let $\mathcal{L}_{\mathrm{D}}$ and $\mathcal{L}_{\mathrm{G}}$ denote the loss functions of the discriminator $D$ and the generator $G$, respectively. The training of the GAN frameworks can be generally illustrated as follows:
\begin{eqnarray}
\min _{\mathrm{D}} \mathcal{L}_{\mathrm{D}} & = & -\underset{x \sim {P_{\mathrm{data}}}}{\mathbb{E}}\left[D(x)\right]+\underset{z \sim p_{\mathrm{z}}}{\mathbb{E}}\left[D(G(z))\right]
\end{eqnarray}
\begin{eqnarray}
\min _{\mathrm{G}} \mathcal{L}_{\mathrm{G}} & = & -\underset{z \sim p_{\mathrm{z}}}{\mathbb{E}}\left[D(G(z))\right]
\end{eqnarray}
where $P_{\mathrm{data}}$ denotes the real data distribution, and $P_{\mathrm{z}}$ is usually the \textit{normal distribution}.


\subsection{Problem Analysis}
\label{pa}

\begin{figure}[hb]
\centering
\subfloat[]{\includegraphics[width=.7\columnwidth]{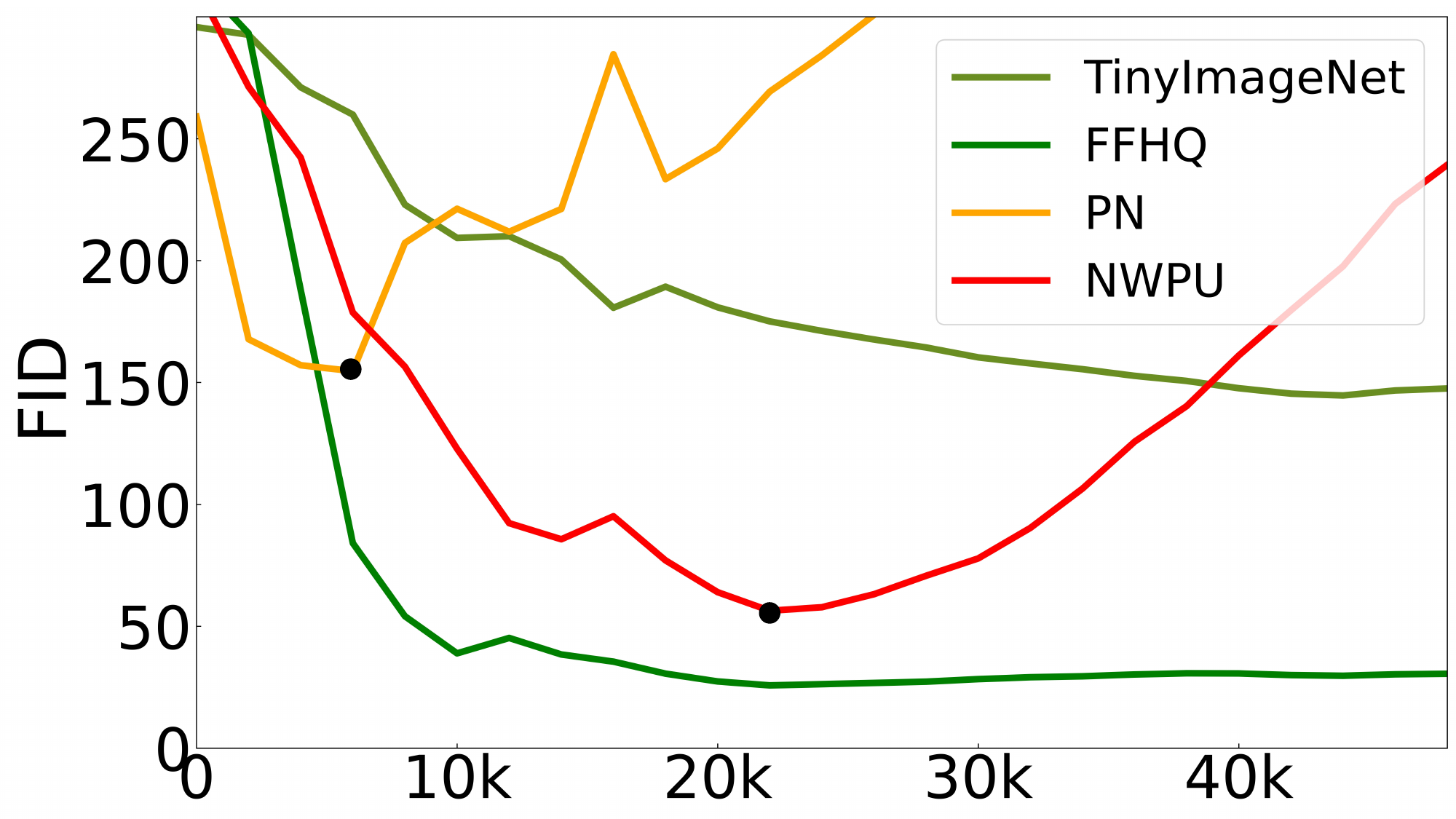}} \vspace{0.5pt}
\subfloat[]{\includegraphics[width=.7\columnwidth]{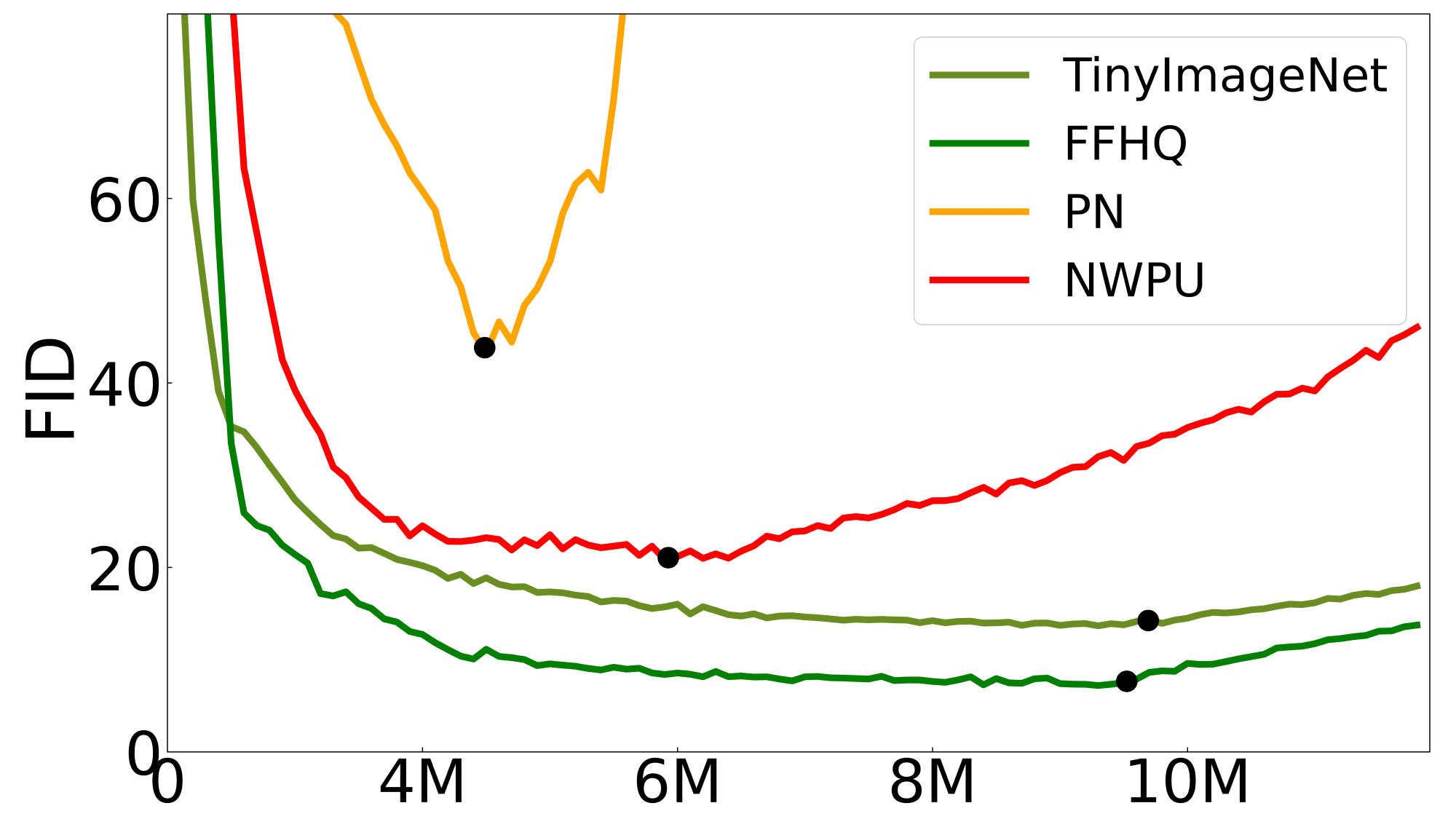}}
\caption{(a) Training curves of BigGAN on TinyImageNet, FFHQ, NWPU and PN datasets. The horizontal axis is the number of training steps. (b) Training curves of StyleGAN2 on TinyImageNet, FFHQ, NWPU and PN datasets. The horizontal axis indicates the training process (the number of real images shown to the discriminator). These training curves highlight an earlier divergence on RS datasets compared to natural ones, emphasizing our claim about the extent of overfitting in RS data.}
\label{fig-model}
\end{figure}

As previously stated, GANs are more susceptible to overfitting on RS image dataset compared to natural image dataset. Notably, the GAN model exhibits earlier divergence when trained on the RS images (NWPU) compared to the natural images (FFHQ), as depicted in Fig. \ref{fig0-0}. To extend our investigation on the prevalence of overfitting across different datasets and GAN architectures, we conduct further experiments using the most popular GAN models: StyleGAN2 \cite{StyleGAN2} and BigGAN \cite{biggan}. We employ NWPU and PatternNet (PN) \cite{patternnet} for RS image datasets, and FFHQ256 and TinyImageNet \cite{tiny-imagenet} for natural image datasets. Each dataset comprises a random sample of 30,000 images. The experiment results are shown in Fig.\ref{fig-model}. The training curves of these GAN models underscore an earlier divergence on RS datasets compared to their natural counterparts, reinforcing our inference regarding the severity of overfitting in RS data. 



To identify the potential causes underlying this phenomenon, we analyze the inherent differences between RS RGB image and natural image. Compared to their natural counterparts, RS images have a higher spatial resolution and a larger coverage area, encompass a wider variety of scenery, and exhibit richer content. Deep learning has an underlying manifold assumption on the training data, i.e., high-dimensional data can be embedded into low-dimensional manifolds. This inherent assumption allows deep learning models to effectively handle high-dimensional data and achieve remarkable performance, as explicitly confirmed by Pope et al. \cite{pope2020intrinsic}. It is well-established that learning a manifold requires a number of samples that grows exponentially with the manifold’s intrinsic dimension \cite{narayanan2010sample}. Hence, we conjecture that GANs are more easily overfitting on RS image generation tasks because of the higher intrinsic dimension of RS images. Under limited training samples, the discriminator struggles to learn the intricate manifold of the RS images, resulting in a tendency to memorize the training data and overfit to the dataset's limited variations.
\begin{figure*}[htpb]
\centering
\includegraphics[width=0.65\textwidth]{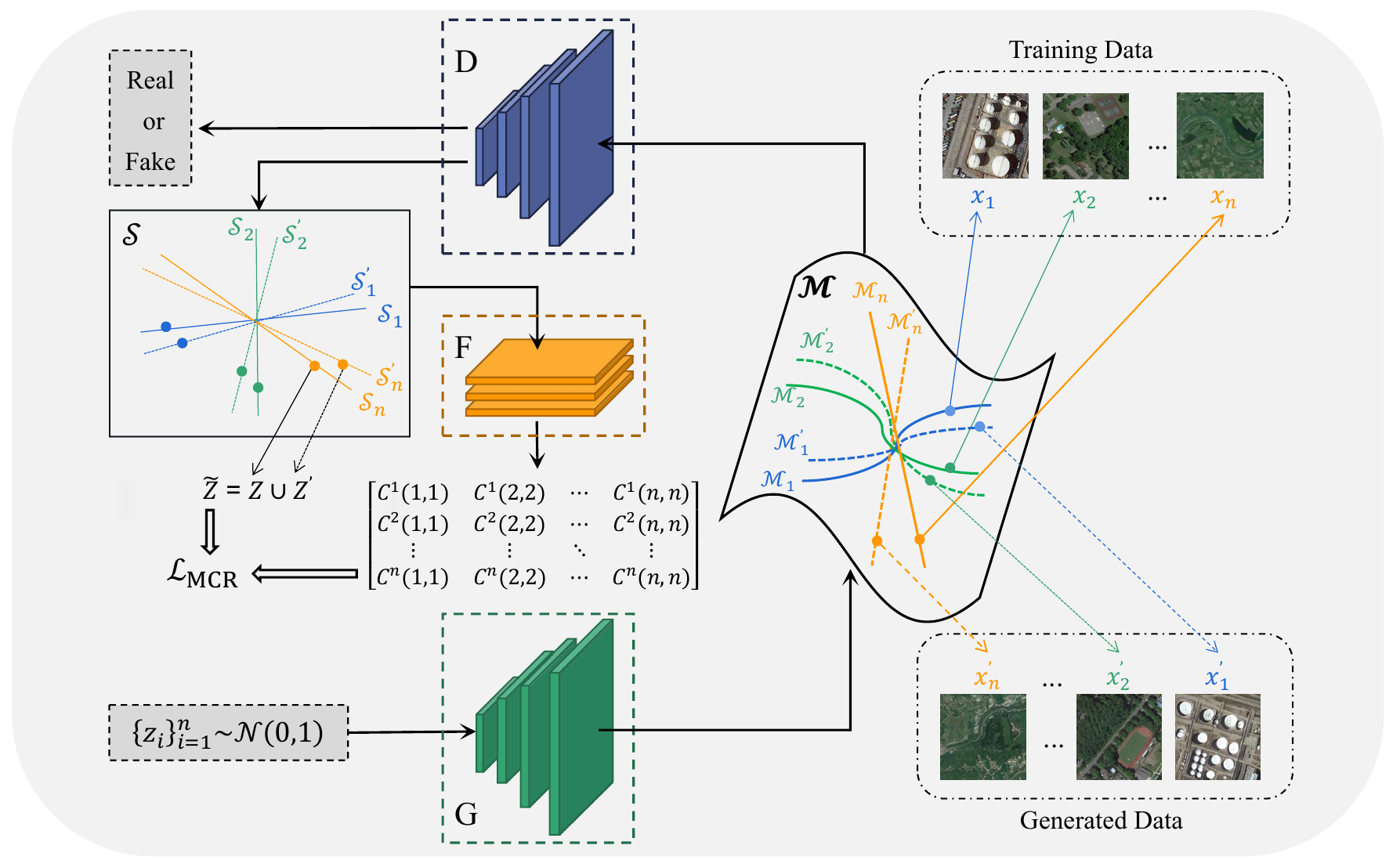}
\caption{The overall architecture of our method. $G$ and $D$ are generator and discriminator. $F$ is the MLP network for relationship matrix $C$. $\mathcal{M}$ represents manifold, and $\mathcal{S}$ represents subspace.}
\label{fig-2}
\end{figure*}

To validate our hypothesis, we employ the Maximum Likelihood Estimation (MLE) method \cite{MLE},
the same approach used by Pope et al. \cite{pope2020intrinsic}, to estimate the intrinsic dimension of the RS images.
\begin{eqnarray}
{m}_{k} = \left[\frac{1}{n(k-1)} \sum_{i=1}^{n} \sum_{j=1}^{k-1} \log \frac{T_{k}\left(x_{i}\right)}{T_{j}\left(x_{i}\right)}\right]^{-1},
\end{eqnarray}
where $T_{j}(x)$ is the Euclidean $\left(\ell_{2}\right)$ distance from $x$ to its $j^{t h}$ nearest neighbor, and $n$ is the number of samples. We conduct this experiment with different values for $k=3, 5, 10, 20$ and a fixed sample size $n = 30,000$. The results for the NWPU, PN, FFHQ, and TinyImageNet datasets are presented in Table \ref{table0}. As expected, the intrinsic dimensions of the NWPU and PN datasets are indeed higher compared to those of the FFHQ and TinyImageNet datasets.  

\begin{table}[]
\centering
\caption{The Intrinsic Dimension Estimated by MLE.}
\renewcommand\arraystretch{1.2}
\begin{tabular}{ccccl}
\toprule
\textbf{Dataset} & \textbf{FFHQ} & \textbf{TinyImageNet} & \textbf{NWPU} & \textbf{PN} \\ \midrule
MLE (k=3)         & 35            & 38                    & 52            & 42          \\
MLE (k=5)         & 34            & 36                    & 53            & 44          \\
MLE (k=10)        & 33            & 33                    & 48            & 44          \\
MLE (k=20)        & 31            & 30                    & 44            & 42          \\ \bottomrule
\end{tabular}
\label{table0}
\end{table}


Based on our experiments, we can infer that discriminator is more susceptible to overfitting on RS data due to the higher intrinsic dimension of the RS dataset compared to the natural dataset. To address this issue, we propose introducing the data manifold to constrain the discriminator. By encouraging the learned features to align with the structure of the real data manifold, we can guide the discriminator to learn the underlying structure of the data distribution and avoid overfitting to the local features of the training data. This approach favors model that performs well across the entire data manifold, promoting the discriminator to generalize well beyond the training data, thereby mitigating the overfitting problem.

\section{Methods}
\label{method}

In this section, we initially present an assumption regarding the real data manifold, accompanied by three critical attributes that ideal representations of images ought to exhibit. Subsequently, we introduce a novel measure to evaluate the data manifold of feature distribution. Building upon this, we propose novel regularization term, which we call manifold constraint regularization (MCR). The framework of the proposed method is shown in Fig.\ref{fig-2}. 

\subsection{Preliminary Assumptions on Data Manifold}

In practice, it is intractable to learn the data manifold in the high-dimensional ambient space $\mathbb{R}^{D}$ \cite{pope2020intrinsic,brown2022verifying}.
Therefore, following the approach in \cite{brown2022verifying,liu2021rectifying}, we assume that the real data manifold comprises of a union of low-dimensional nonlinear submanifolds $\cup_{j = 1}^{k} \mathcal{M}_{j} \subset \mathbb{R}^{D}$, where each submanifold $\mathcal{M}_j$ is of dimension $d_j<<D$. Each submanifold $\mathcal{M}_{j} \subset \mathbb{R}^{D}$ can be transformed to a linear subspace $\mathcal{S}_{j} \subset \mathbb{R}^{d}$, which we refer to the feature space. 
From this assumption, we can infer that images residing on the same submanifold share similarities, whereas images on different submanifolds exhibit distinct characteristics. Consequently, the ideal features of these images should exhibit the following attributes:
\begin{itemize}
\item Between-submanifold Discrepancy: Features of images from different submanifold should be highly uncorrelated.
\item Within-submanifold Similarity: Features of images from the same submanifold should be relatively correlated.
\item Maximally Variance: Features should have as large dimension as possible to cover all the submanifolds and be variant for the same submanifold. 
\end{itemize}
Therefore, in order to accurately capture the real data manifold, it’s desirable for the features from different submanifolds to be as uncorrelated as possible. On the other hand, features from the same submanifold should display a high correlation and coherence. Simultaneously, the features should exhibit maximum variance to cover all possible submanifolds. By learning features that adhere to these properties, the discriminator can be considered to have successfully captured the underlying structure of the real data manifold. 

\subsection{Data Manifold Evaluation}

Based on the data manifold assumption, features from different submanifolds should have a sparse feature distribution. Conversely, features within the same submanifold should have a compact feature distribution. Our first step is to establish a metric for the compactness of the feature distribution by examining the relationships between features. Secondly, we design data manifold evaluation function based on this measure.

\textbf{Step I}. We propose leveraging singular values to quantify compactness. Recall that singular values (represented by $\lambda_{i}$ here) capture the amount of variance explained by each principal component in a data matrix. Singular vectors corresponding to larger singular values represent the principal stretching directions of the data. In simpler terms, a larger number of significant singular values indicates a more uniform distribution, while fewer significant values suggest a more compact distribution. Based on this principle, we propose the following measure:

\begin{eqnarray}
\mathcal{V}(Z) = \sum_{i = 1} \lambda_{i}^{2}=\operatorname{Tr}(Z Z^{\mathrm{T}}).
\label{eq3}
\end{eqnarray}

A larger value of $\mathcal{V}(Z)$ indicates a broader span of the singular vectors and thus greater uniformity of the data. Considering computational efficiency, we opt for the trace of $Z Z^{T}$ in our method. In the Section \ref{sec:5}, we theoretically prove $\mathcal{V}(Z)$ can measure the compactness of a distribution. 

\textbf{Step II}. Building upon Eq.~\ref{eq3}, we introduce a novel data manifold evaluation function $\mathcal{L}_{\operatorname{Tr}}(Z)$. This function leverages the concept of feature compactness to assess how well the learned features capture the underlying data manifold.

According to the aforementioned assumption, features of different submanifolds should be maximally uncorrelated with each other. Therefore, they together should span a space of the largest possible volume or dimension, and $\mathcal{V}(Z)$ should be as large as possible. Conversely, learned features of the same submanifold should be highly correlated and coherent. Therefore, they should only span a space of a very small volume or dimension. To capture these contrasting properties, our evaluation function, $\mathcal{L}_{\operatorname{Tr}}(Z)$, incorporates both the overall and within-submanifold compactness:
\begin{eqnarray}
\label{eq4}
\mathcal{L}_{\operatorname{Tr}}(Z) & = & \frac{1}{2 n}(\operatorname{Tr}(Z Z^{\mathrm{T}})-\sum_{j=1}^{k} \operatorname{Tr}(Z C^j Z^{\mathrm{T}}))
\end{eqnarray}
where $Z=\left[z_{1}, \ldots, z_{n}\right] \subset \mathbb{R}^{d \times n}$ denotes the representations of images, $C=\left\{C^{j} \in \mathbb{R}^{n \times n}\right\}_{j=1}^{k}$ is a set of positive diagonal matrices whose diagonal entries denote the membership of the $n$ samples in the $k$ submanifolds. If the sample $x_i$ belongs to the submanifold $j$, then $C^{j}(i,i) = 1$. Otherwise, $C^{j}(i,i) = 0$, where $i \in \left\{ {1,...,n} \right\}, j \in \left\{ {1,...,k} \right\}$. 

The first term, $\operatorname{Tr}(Z Z^{\mathrm{T}})$, measures the overall compactness of the entire feature set, as defined by Eq.~\ref{eq3}. The second term, $\sum_{j=1}^{k} \operatorname{Tr}(Z C^j Z^{\mathrm{T}})$, represents the compactness within each individual class. The difference between these two terms reflects the ``dispersibility'' between features from different submanifolds, and a larger difference signifies better separation. Therefore, a higher value of $\mathcal{L}_{\operatorname{Tr}}(Z)$ indicates that the learned features effectively capture the intrinsic structure of the data manifold, with features from different submanifolds being well-separated and features within a submanifold being well-clustered.

\subsection{Manifold Constraint Regularization}

Building on the concept of data manifold evaluation introduced earlier, we propose a novel regularization term, $\mathcal{L}_{\mathrm{MCR}}(Z, Z^{\prime})$, to assess how well the generated image manifold aligns with the real image manifold. 
\begin{equation}
\label{eq5}
\begin{split}
\mathcal{L}_{\mathrm{MCR}}(Z, Z^{\prime}) =& \frac{1}{2 n} (\operatorname{Tr}(\widetilde{Z} \widetilde{Z}^{\mathrm{T}})- \frac{1}{2} \sum_{j=1}^{k} \operatorname{Tr}(Z C^j Z^{\mathrm{T}}) \\ 
& -\frac{1}{2} \sum_{j=1}^{k} \operatorname{Tr}(Z^{\prime} {C^{\prime}}^j {Z^{\prime}}^{\mathrm{T}})),
\end{split}
\end{equation}
where $\widetilde{Z} = Z \cup Z^{\prime}$, $Z$ and $Z^{\prime}$ represent the feature representations of real and generated images, respectively. $C^{j}$ and ${C^{\prime}}^j$ are diagonal matrices encoding submanifold membership information for real and generated data, respectively. $k$ is the number of submanifolds within the data. 

The first term of $\mathcal{L}_{\mathrm{MCR}}$ measures the compactness of the joint distribution of real and generated images, i.e. the volume of the space spanned by the real and generated features jointly. The second and third terms measure the compactness of the real and generated distributions individually. If the generated image manifold aligns well with the real image manifold, then the combined distribution should be compact, which translates to a lower value for $\mathcal{L}_{\mathrm{MCR}}$. Conversely, a larger value indicates a significant deviation between the manifolds of real and generated data. We theoretically prove that the $\mathcal{L}_{\mathrm{MCR}}$ could measure the volume of the space between $Z$ and $Z^{\prime}$. The proof will be elaborated in the Section \ref{sec:5}. 

\subsection{Loss Functions}
\label{loss}
We incorporate the data manifold constraint as regularization terms in the loss functions of GAN models, as depicted in Eq.\ref{eq6} and Eq.\ref{eq7}, where ${\lambda}$ and $\gamma$ are hyperparameters. Specifically, we optimize the discriminator by maximizing $\mathcal{L}_{\mathrm{MCR}}$, which encourages it to focus on the underlying data manifold and push the manifold of the generated samples to be misaligned with that of the real images. Conversely, minimizing $\mathcal{L}_{\mathrm{MCR}}$ guides the generator towards producing samples that align closely with the real image manifold. This dual strategy sharpens the discriminator's ability to differentiate and guides the generator towards producing higher-quality, more diverse images. 
\begin{equation}
\label{eq6}
\begin{split}
\min _{\mathrm{D}} \mathcal{L}_{\mathrm{D}} = & \underset{z \sim p_{\mathrm{z}}}{\mathbb{E}}\left[D(G(z))\right]-\underset{x \sim {P_{\mathrm{data}}}}{\mathbb{E}}\left[D(x)\right] \\
& -{\lambda} \mathcal{L}_{\mathrm{MCR}}(Z,Z^{\prime})
\end{split}
\end{equation}
\begin{eqnarray}
\label{eq7}
\min _{\mathrm{G}} \mathcal{L}_{\mathrm{G}} & = & -\underset{z \sim p_{\mathrm{z}}}{\mathbb{E}}\left[D(G(z))\right]+ \gamma \mathcal{L}_{\mathrm{MCR}}(Z,Z^{\prime})
\end{eqnarray}
The representations $Z$ and $Z^{\prime}$ are learned by the discriminator. It has been shown that different network layers are responsible for different levels of detail in the images. Empirically, the latter blocks of the network have more effect on the style (e.g. texture and color) of the image whereas the earlier blocks impact the coarse structure or content of the image \cite{styleclip}. Thus, we choose features from a shallow network layer of the discriminator as representations $Z$ and $Z^{\prime}$ in our experiments.

While the effectiveness of $\mathcal{L}_{\mathrm{MCR}}$ relies on knowing the submanifold membership matrices $C$, obtaining this knowledge can be expensive or impractical in unsupervised settings. To address this challenge, we propose a novel unsupervised approach to learn this information directly from the data.

Specifically, we employ a three-layer Multi-Layer Perceptron (MLP) network, denoted as $\boldsymbol{F}$, to learn the relationship between data points in the feature space. This network takes the features $Z$ from the discriminator as input and outputs a matrix $M \in \mathbb{R}^{n \times n}$. The elements of the $j$-th row of the $M$ matrix are the diagonal elements of the $C^j$ matrix. Hence, $C^j(i,i)=M(j,i)$, where $i \in \left\{1,...,n\right\}$. The key idea is that elements along each row of $M$ correspond to the submanifold membership probabilities for a particular data point. In other words, a higher value $M(j, i)$ indicates a higher likelihood that data points $x_i$ and $x_j$ belong to the same submanifold.

In order to train the network $\boldsymbol{F}$, we employ a pre-trained encoder network, denoted as $f_{\mathrm{pre}}$, to extract features for each data point. These features are represented by the matrix $\bar Z = \left\{ {\bar z_1,...,\bar z_{n}} \right\}$, where $\bar z_i = {f_{\mathrm{pre}}}\left( {x_i} \right)$, $i \in \left\{ {1,...,n} \right\}$. We define a similarity measure based on the Euclidean distance between feature vectors. Regarding the $j$-th element ${{{\bar z}_j}}$ in $\bar Z$ as the anchor, we define that:
\begin{equation}
M^{pro}(j,i) = \exp ( -{{{\| {{{\bar z}_i} - {{\bar z}_j}} \|_2^2} \mathord{\left/
 {\vphantom {{\left\| {{{\bar z}_i} - {{\bar z}_j}} \right\|_2^2} \tau }} \right.
 \kern-\nulldelimiterspace} \tau }} )
\end{equation}
where $\tau$ is the temperature hyperparameter,  ${{\bar z}_i}$ is the $i$-th sample in $\bar Z$. Then, we can obtain:
\begin{equation}
M^{pro}_j = \left[ {M^{pro}(j,1),..., M^{pro}(j,n)} \right]
\end{equation}
We in turn treat the samples in $\bar Z$ as anchors and obtain ${M^{pro}} = {\left[ {M^{pro}_1,...,M^{pro}_n} \right]^{\mathrm{T}}}$. To this end, we give the prior constraint ${M^{pro}}$ of $M$ based on the similarity of different pairs of samples. The loss function of network $\boldsymbol{F}$ is as follows:
\begin{equation}\label{dswd}
{\mathcal{L}_{con}} = \left\| {{M^{pro}} - \boldsymbol{F}(Z)} \right\|_2^2
\end{equation}

The network $\boldsymbol{F}$ is first pretrained on the training dataset, with $Z$ being extracted by $f_{\mathrm{pre}}$. Then, it is trained together with the discriminator. In the optimal case, the network $\boldsymbol{F}$ will learn the relationship between the samples, and $M(i,j)$ will be close to 1 for data points belonging to the same submanifold, and close to 0 for points from different submanifolds. In our experiments, we find that the choice of the pre-trained encoder has little impact on the final results.




\section{Theoretical Analysis}
\label{sec:5}

In this section, we present theoretical analysis of the proposed method, MCR. Firstly, we establish a theoretical connection between the compactness measure $\mathcal{V}(Z)$ and information theory. Secondly, we prove that $\mathcal{L}_{\mathrm{Tr}}$ can evaluate the data manifold of features, and $\mathcal{L}_{\mathrm{MCR}}$ can measure the disparity of the manifolds between generated images and real images.

We begin by demonstrating the connection between our proposed measure, denoted by $\mathcal{V}(Z)$, and information theory. 

\begin{proposition}
\label{p1}
$\mathcal{V}(Z)=\frac{1}{2n} \operatorname { T r }\left(Z Z^{\mathrm{T}}\right)$ where $Z=\left[z^{1}, \ldots, z^{n}\right] \subset \mathbb{R}^{d \times n}$ can measure the compactness of a distribution from its finite samples $Z$.
\end{proposition}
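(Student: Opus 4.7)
The plan is to prove the proposition in three linked steps, each tightening the notion of ``compactness'' that the trace is supposed to capture. I will rely only on the identity already established by the authors in Eq.~\ref{eq3}, on the maximum-entropy principle, and on a first-order expansion of $\log\det$.

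First, I would apply Eq.~\ref{eq3} directly to $Z$, giving
\begin{equation*}
\frac{1}{2n}\operatorname{Tr}(ZZ^{T}) \;=\; \frac{1}{2n}\|Z\|_{F}^{2} \;=\; \frac{1}{2n}\sum_{i=1}^{\min(d,n)} \sigma_{i}(Z)^{2},
\end{equation*}
where $\sigma_{i}(Z)$ are the singular values of $Z$. This already carries the intended geometric meaning: the quantity is small exactly when the columns of $Z$ lie in a low-volume ellipsoid, either because most $\sigma_{i}$ are small (concentration near the origin) or because only a few are nonzero (concentration near a low-dimensional subspace). Both situations match what the text calls a compact distribution, and a large trace forces at least one $\sigma_{i}$ to be large, i.e.\ at least one direction of wide spread.

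Second, I would tie this spectral quantity to population statistics so that the phrase ``from its finite samples'' in the proposition is justified. Assuming centered features, the empirical covariance is $\hat{\Sigma} = \frac{1}{n}ZZ^{T}$, so $\frac{1}{2n}\operatorname{Tr}(ZZ^{T}) = \frac{1}{2}\operatorname{Tr}(\hat{\Sigma})$ is half the total empirical variance. The strong law of large numbers then yields $\frac{1}{n}\operatorname{Tr}(ZZ^{T}) \to \mathbb{E}_{z \sim p}\|z\|^{2}$, so the empirical quantity converges to a population functional (total variance) that is itself a standard measure of compactness.

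Third, I would lift this to the information-theoretic statement the paper invokes. Among all distributions on $\mathbb{R}^{d}$ with fixed $\operatorname{Tr}(\Sigma) = t$, the differential entropy is maximized by the isotropic Gaussian $\mathcal{N}(0,(t/d)I)$; this follows from the maximum-entropy principle under a second-moment constraint combined with $\det\Sigma \le (\operatorname{Tr}\Sigma/d)^{d}$ by AM--GM on the eigenvalues of $\Sigma$. Hence smaller $\operatorname{Tr}(ZZ^{T})/n$ caps the achievable entropy, giving the ``less information $\Leftrightarrow$ more compact'' link the paper relies on. To match the coding-length style measure used in the MCR$^{2}$ framework cited earlier, I would additionally record the expansion $\tfrac{1}{2}\log\det(I + \alpha ZZ^{T}) = \tfrac{\alpha}{2}\operatorname{Tr}(ZZ^{T}) + O(\alpha^{2})$, which identifies $\tfrac{1}{2n}\operatorname{Tr}(ZZ^{T})$ (up to the scalar $\alpha$) with the first-order volume/coding-length proxy for the sample set $Z$.

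The main obstacle is that ``compactness'' is not formally defined in the paper, so the proof must commit to a surrogate. I would argue that all three surrogates (spectral volume, total empirical variance, Gaussian entropy upper bound) are monotone in $\operatorname{Tr}(ZZ^{T})/n$, and that the AM--GM/max-entropy step in the third part is the one non-routine piece; everything else is a chain of standard identities. A secondary subtlety is centering: without the assumption of zero sample mean the trace measures the second moment about the origin rather than the variance, so I would state the centering convention explicitly (or replace $Z$ by $Z(I-\tfrac{1}{n}\mathbf{1}\mathbf{1}^{T})$) before invoking the covariance interpretation.
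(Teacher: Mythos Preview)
Your proposal is correct but takes a different route from the paper. The paper's proof is essentially a two-line reduction: it applies the first-order Taylor expansion $\log\det(C+D)\approx\log\det(C)+\operatorname{Tr}(D^{T}C^{-1})$ with $C=I$ and $D=\tfrac{1}{n}ZZ^{T}$ to obtain $\tfrac{1}{2n}\operatorname{Tr}(ZZ^{T})\approx\tfrac{1}{2}\log\det(I+\tfrac{1}{n}ZZ^{T})$, and then identifies the right-hand side with the rate-distortion coding length $\mathcal{L}(Z,\epsilon)\doteq\tfrac{m+d}{2}\log\det(I+\tfrac{d}{m\epsilon^{2}}ZZ^{T})$, which is taken \emph{by definition} as a compactness measure from information theory. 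You do record this same log-det expansion at the end of your third step, but only as a supplementary remark; your main argument instead constructs three independent surrogates for compactness (spectral ellipsoid volume, total empirical variance, and a Gaussian differential-entropy cap via AM--GM on the eigenvalues) and shows each is monotone in the trace. Your route is longer but more self-contained---it does not ask the reader to accept rate distortion as the primitive notion of compactness---and your centering caveat is a genuine subtlety the paper glosses over; the paper's route is shorter and dovetails directly with the MCR$^{2}$ coding-length machinery it cites elsewhere.
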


\begin{proof}

Based on the first-order Taylor series approximation, $\log \operatorname{det}(\mathbf{C}+\mathbf{D}) \approx \log \operatorname{det}(\mathbf{C})+\operatorname{Tr}\left(\mathbf{D}^{\mathrm{T}} \mathbf{C}^{-1}\right)$, we can get following equations.
\begin{equation*}
   \begin{aligned}
    \frac{1}{2n}\operatorname { T r }(Z Z^{\mathrm{T}}) &= \frac{1}{2} ( \frac{1}{n} \operatorname { T r }(Z Z^{\mathrm{T}}) + \log \operatorname{det}(\boldsymbol{I}))\\
    &\approx  \frac{1}{2} \log \operatorname{det}(\boldsymbol{I}+ \frac{1}{n} Z Z^{\mathrm{T}})
   \end{aligned}
\end{equation*}

In information theory, rate distortion can be used to measure the “compactness” of a random distribution \cite{cover2006elements}. Given a random variable $z$ and a prescribed precision $\epsilon > 0$, the rate distortion $R(z,\epsilon )$ is the minimal number of binary bits needed to encode $z$ such that the expected decoding error is less than $\epsilon $. Given finite samples $Z=\left[z^{1}, \ldots, z^{m}\right] \subset \mathbb{R}^{d \times m}$, the average number of bits needed is given by the following expression: $\mathcal{L}(Z, \epsilon) \doteq\left(\frac{m+d}{2m}\right) \log \operatorname{det}\left(\boldsymbol{I}+\frac{d}{m \epsilon^{2}} Z Z^{\mathrm{T}}\right)$. As the sample size $m$ is large, our approach can be seen as an approximation of the rate distortion, which completes our proof. 
\end{proof}

Based on the above derivation, the proposed measure can be rewritten as: 
\begin{equation}
\begin{split}
\mathcal{L}_{\mathrm{Tr}} \approx & \frac{1}{2} \log \operatorname{det}(\boldsymbol{I}+\alpha Z Z^{\mathrm{T}}) \\
& -\sum_{j=1}^{k} \frac{\gamma_{j}}{2} \log \operatorname{det}(\boldsymbol{I}+\alpha_{j} Z C^{j} Z^{\mathrm{T}})
\end{split}
\end{equation}
where $\alpha=\frac{d}{n \epsilon^{2}}, \alpha_{j}=\frac{d}{\operatorname{Tr}\left(C^{j}\right) \epsilon^{2}}, \gamma_{j}=\frac{\operatorname{Tr}\left(C^{j}\right)}{n} \text { for } j=1, \ldots, k$. 

Based on this proposition, we can infer that the optimal solution of Eq.\ref{eq4} have following properties:
\begin{theorem}
\label{t1}
Suppose $\boldsymbol{Z}^{*}$ is the optimal solution that maximizes the function Eq.\ref{eq4}. We have:

- Between-submanifold Discrepancy: If the ambient space is adequately large, the subspaces are all orthogonal to each other, i.e.  $\left(\boldsymbol{Z}_{i}^{*}\right)^{\top} \boldsymbol{Z}_{j}^{*}=\mathbf{0}$  for $i \neq j$.

- Maximally Variance: If the coding precision is adequately high, i.e.,  $\epsilon^{4}<\min _{j}\left\{\frac{n_{j}}{n} \frac{d^{2}}{d_{j}^{2}}\right\}$, each subspace achieves its maximal dimension, i.e. $\operatorname{rank}\left(\boldsymbol{Z}_{j}^{*}\right)=d_{j}$ .
\end{theorem}

The proof for Theorem \ref{t1} is provided in the Appendix. In summary, we initially assume that $\left(\boldsymbol{Z}_{i}^{*}\right)^{\top} \boldsymbol{Z}_{j}^{*} \neq \mathbf{0}$. Based on the Singular Value Decomposition (SVD) and the condition $\sum_{j=1}^{k} d_{j} \leq d$, we infer that $\boldsymbol{Z}^{*}$ cannot be the optimal solution of Eq.\ref{eq4}. This contradiction validates the \textit{Between-submanifold Discrepancy}. Using a similar approach, we demonstrate that the outcome of Eq.\ref{eq4} is related to the singular values of $\boldsymbol{Z}_{j}^{*}$. Under the condition $\epsilon^{4}<\min _{j}\left\{\frac{n_{j}}{n} \frac{d^{2}}{d_{j}^{2}}\right\}$, $\operatorname{rank}\left(\boldsymbol{Z}_{j}^{*}\right)=d_{j}$, which confirms the \textit{Maximally Variance}.

Hence, $\mathcal{L}_{\mathrm{Tr}}$ effectively evaluates the data manifold of the representations. Since the features of each submanifold, $Z_j$ and $Z^{\prime}_j$, are similar to subspaces or Gaussians, their “distance” can be measured by the rate distortion. 
\begin{equation}
\label{eq-down}
\begin{split}
\mathcal{L}_{\mathrm{inf}}(Z, Z^{\prime}) =& \frac{1}{2 n k}\sum_{j=1}^{k}(\operatorname{Tr}(\widetilde{Z}_j \widetilde{Z}^{\mathrm{T}}_j)- \frac{1}{2}\operatorname{Tr}(Z_j Z^{\mathrm{T}}_j) \\ 
& -\frac{1}{2}\operatorname{Tr}(Z^{\prime}_j {Z^{\prime}_j}^{\mathrm{T}})),
\end{split}
\end{equation}
where $Z_j$ and $Z^{\prime}_j$ represent the feature representations of real and generated images belonging to submanifold $j$, respectively. $\widetilde{Z}_j = Z_j \cup Z^{\prime}_j$, $j \in \left\{ {1,...,k} \right\}$, and $k$ is the number of submanifolds within the data.

The metric $\mathcal{L}_{\mathrm{inf}}$ quantifies the distance between the submanifolds of real and generated images. A smaller $\mathcal{L}_{\mathrm{inf}}$ value indicates that the submanifolds of the real and generated data are more closely aligned. Given that submanifolds are inherently smaller than the entire manifold, we can say that $\sum_{j=1}^{k}(\operatorname{Tr}(\widetilde{Z}_j \widetilde{Z}^{\mathrm{T}}_j) \leq \operatorname{Tr}(\widetilde{Z} \widetilde{Z}^{\mathrm{T}})$. This leads to the conclusion that $\mathcal{L}_{\mathrm{inf}} \leq \mathcal{L}_{\mathrm{MCR}}$. Essentially, minimizing $\mathcal{L}_{\mathrm{MCR}}$ during training prompts the generator to produce images that closely align with the real data manifold across all classes. This inherently minimizes the disparity between submanifolds, as indicated by a lower $\mathcal{L}_{\mathrm{inf}}$ value. Conversely, the discriminator aims to maximize the separation between these manifolds by maximizing $\mathcal{L}_{\mathrm{MCR}}$, while concurrently drawing the features within the same submanifold closer together.

\section{Experiments}
\label{sec:exp}
In this section, we first introduce the datasets used in our experiments. Next, we provide the details of our experimental setup. We then present the results, including both quantitative metrics and qualitative examples, to demonstrate the effectiveness of our proposed method compared to existing approaches. Additionally, we showcase how our method benefits downstream tasks. Finally, we provide more ablation and analysis of different components of our method. 

\subsection{Datasets}

We use three RS datasets to evaluate our method: the UC Merced Land Use (UCLand) Dataset \cite{UCM}, NWPU-RESISC45 (NWPU) Dataset \cite{NWPU} and PatternNet (PN) Dataset \cite{patternnet}. Their information is shown in Table \ref{table-data}.

\begin{table}[h]
\centering
\caption{Details of the Dataset}
\renewcommand\arraystretch{1.5}
\begin{tabular}{cccc}
\toprule
Attribute        & UCLand                                                                     & NWPU         & PN           \\ \midrule
Images per class & 100                                                                        & 700          & 800          \\
Scene Classes    & 21                                                                         & 45           & 38           \\
Resolution (m)   & 0.3                                                                        & 0.2-30       & 0.062-4.693  \\
Image Size       & $256\times256$                                                                    & $256\times256$      & $256\times256$      \\
Source           & \begin{tabular}[c]{@{}c@{}}United States \\ Geological Survey\end{tabular} & Google Earth & Google Earth \\ \bottomrule
\end{tabular}
\label{table-data}
\end{table}

The UC Merced Land Use Dataset is one of the most widely used datasets in the field of remote sensing scene classification. It has 21 scene categories, each with 100 images. Each image has the size $256\times256$ and a spatial resolution of 0.3m. The images in the dataset come from more than 20 cities in the United States, including Las Vegas, Los Angeles, Miami, Santa Barbara, and Seattle.

The NWPU-RESISC45 Dataset has 31,500 images covering more than 100 countries and regions around the world. It has 45 categories with 700 images in each category. Each image is $256\times256$ pixels in size. The spatial resolution of this dataset is up to 0.2m and the lowest is 30m. The images are varied in lighting, shooting angle, imaging conditions, and so on. 

The PatternNet Dataset is a large-scale high-resolution remote sensing dataset. It has 38 categories with 800 images in each category. Each image is $256\times256$ pixels in size. The spatial resolution of this dataset varies from 0.06m to 4.7m per pixel. The images in PatternNet are collected from Google Earth imagery or via the Google Map API for some US cities.  


\begin{figure*}[htpb]
\centering
\subfloat[]{\includegraphics[width=.5\columnwidth]{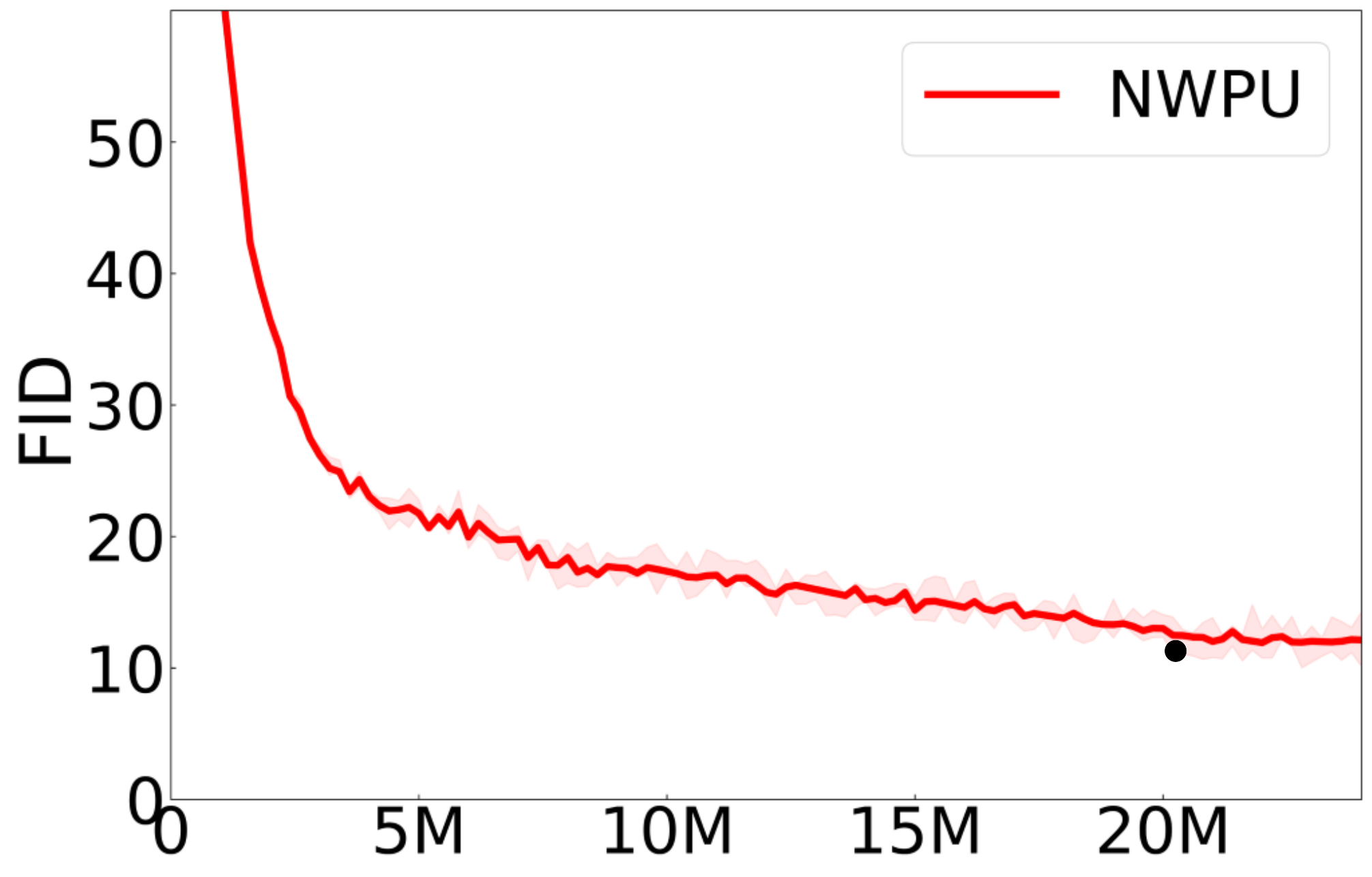}}
\subfloat[]{\includegraphics[width=.5\columnwidth]{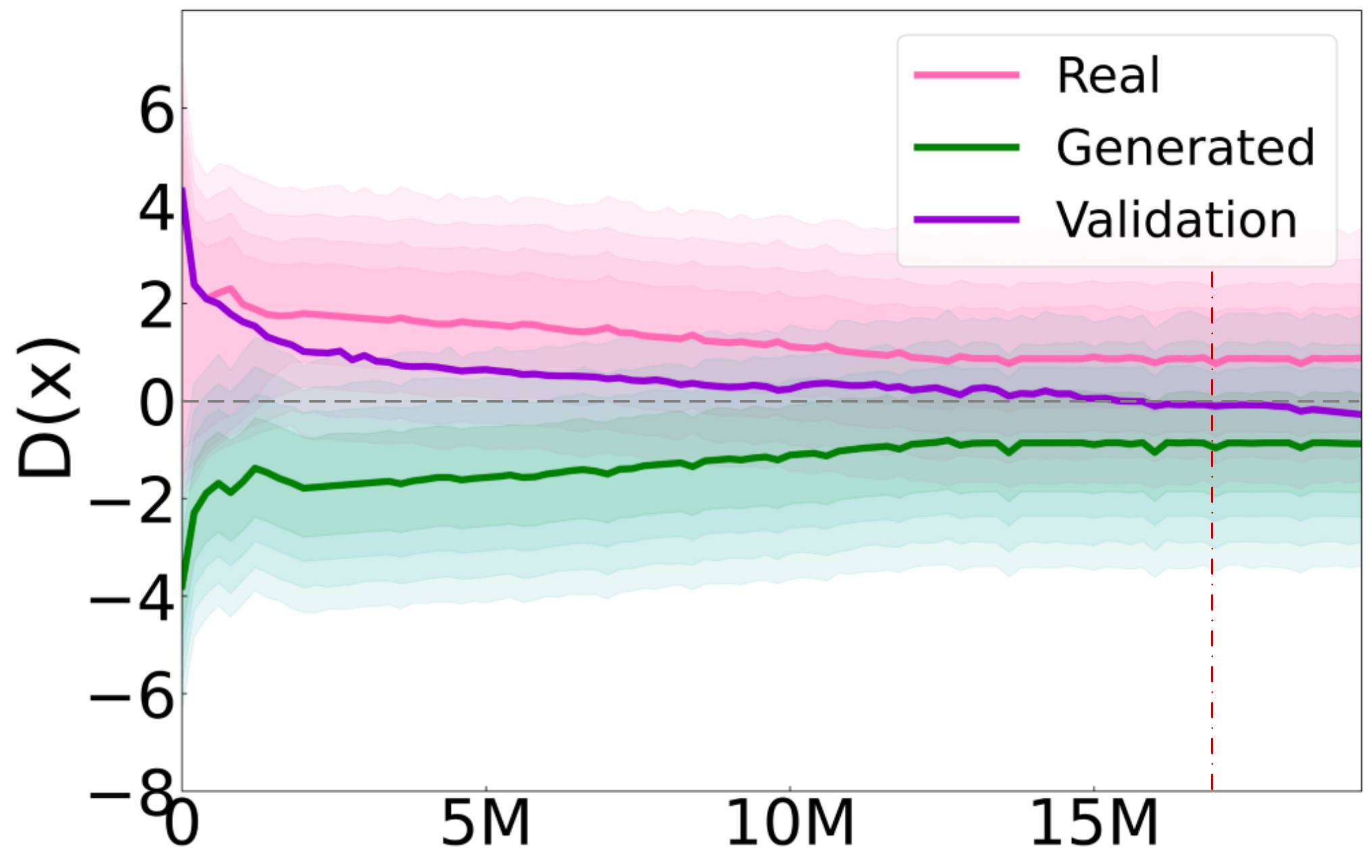}}
\subfloat[]{\includegraphics[width=.5\columnwidth]{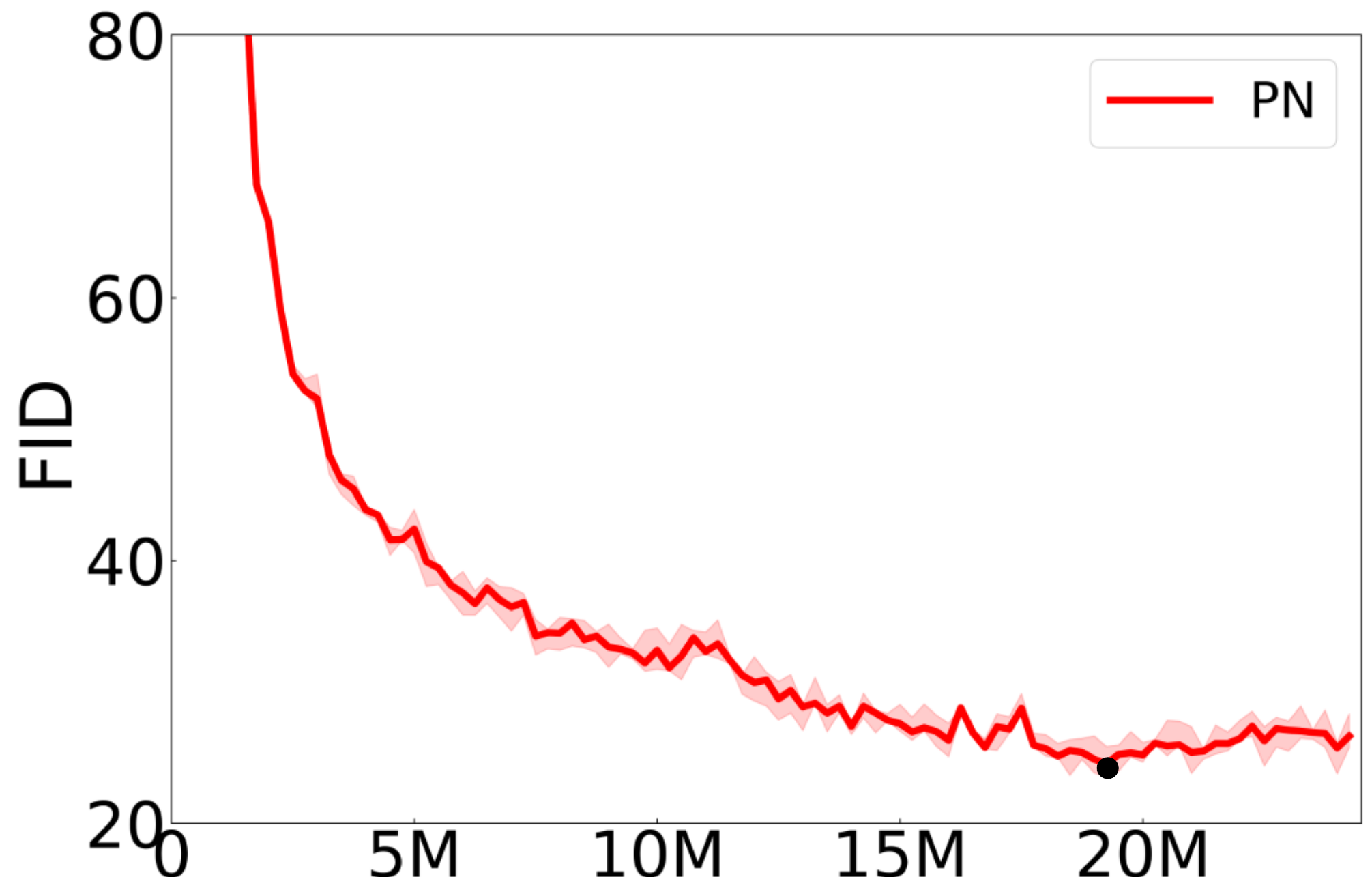}}
\subfloat[]{\includegraphics[width=.5\columnwidth]{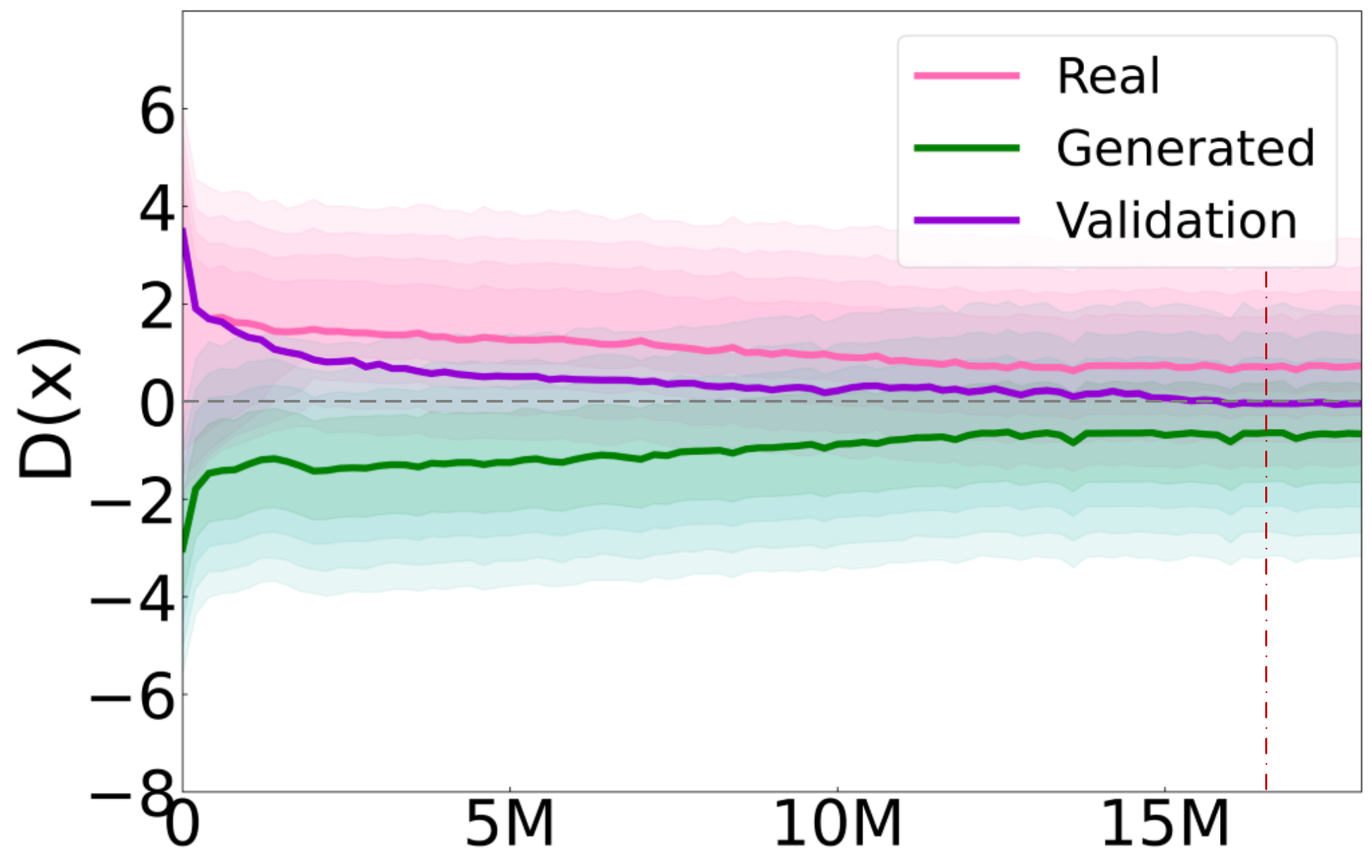}}
\caption{The horizontal axis indicates the training process (the number of real images shown to the discriminator). (a) Training curves of our method on NWPU dataset. (b) The outputs of the discriminator during training on NWPU dataset. (c) Training curves of our method on PN dataset. (d) The outputs of the discriminator during training on PN dataset. No divergence occurs during training, and the discriminator maintains high accuracy on the validation set. These findings suggest that MCR effectively alleviates the discriminator’s overfitting issue.}
\label{fig0-1}
\end{figure*}

\subsection{Experiment Setup}

\textbf{Baseline Methods}: We evaluate our approach to RS image generation by comparing it with established GAN models, including \textbf{BigGAN} \cite{biggan} and \textbf{StyleGAN2} \cite{StyleGAN2}. We also benchmark our method against techniques specifically developed to address GAN overfitting. These include the augmentation methods \textbf{ADA} \cite{stylegan2-ada} and \textbf{APA} \cite{APA}, regularization methods \textbf{LeCam} \cite{lecam}, \textbf{AdaptiveMix} \cite{liu2023adaptivemix} and \textbf{InsGen} \cite{yang2021data}, as well as few-shot generation methods \textbf{FastGAN} \cite{liu2020towards} and \textbf{MoCA} \cite{li2022prototype}.

\textbf{Implementation Details}: We use the official PyTorch implementation of StyleGAN2, ADA, FastGAN, InsGen, MoCA and AdaptiveMix. For BigGAN, APA and LeCam, we use the implementations provided by \cite{contragan}. Throughout our experiments, we set hyperparameters $\lambda=1$ and $\gamma=1$. Further details regarding the ablation study on these hyperparameters can be found in Section \ref{sec:ablation}.

\textbf{Evaluation Metrics}: We evaluate our method using Frechet inception distance (FID) \cite{FID}, as the most commonly-used metric for measuring the quality and diversity of images generated by GAN models. We also employ the Kernel Inception Distance (KID) \cite{KID}, a metric that remains unbiased by empirical bias \cite{xu2018empirical}, to further validate our results.

\subsection{RS Image Generation}
\begin{figure}[]
\centering
\setlength{\belowcaptionskip}{2pt}
\subfloat[]{\includegraphics[width=.7\columnwidth]{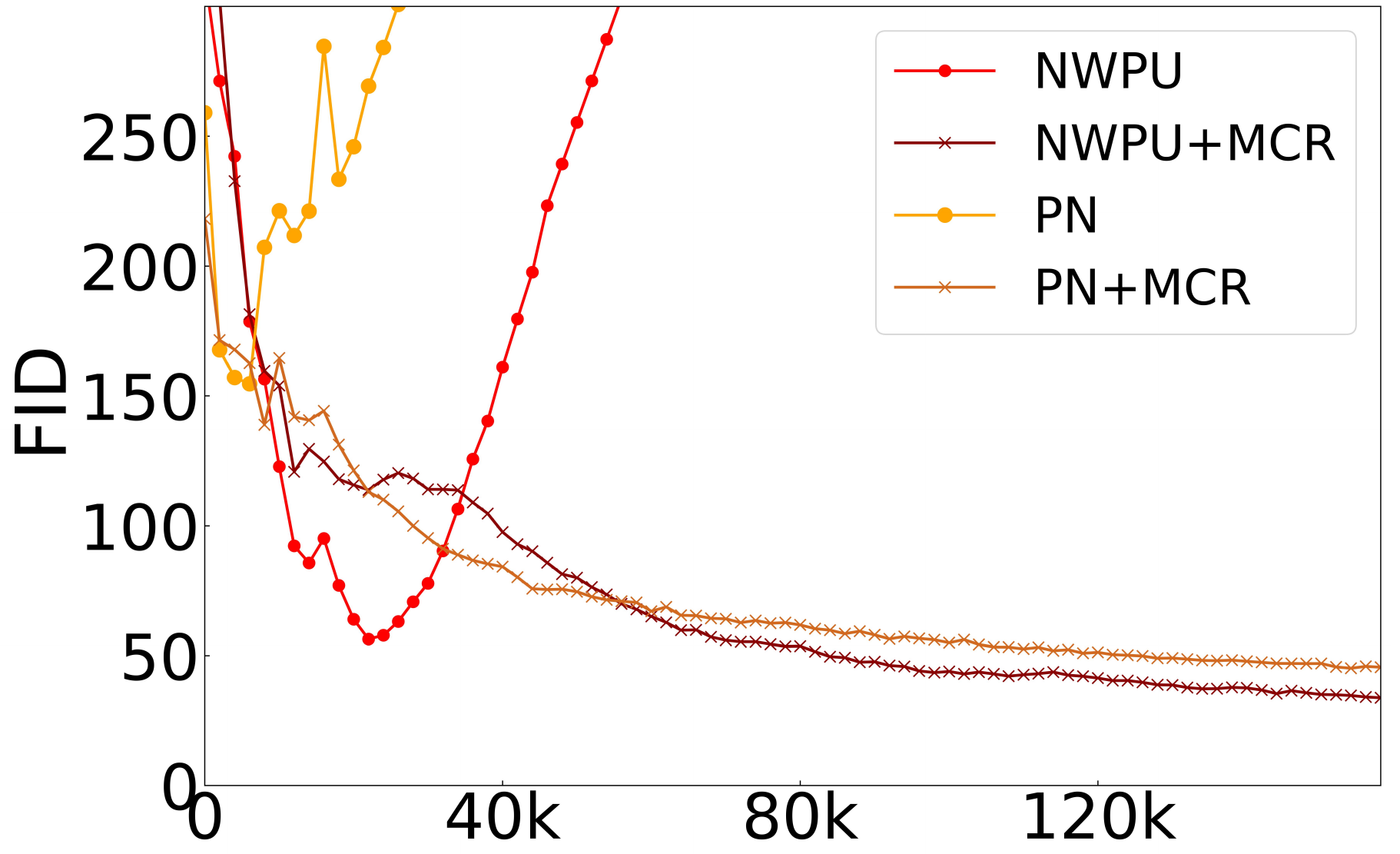}}\hspace{2pt}
\subfloat[]{\includegraphics[width=.7\columnwidth]{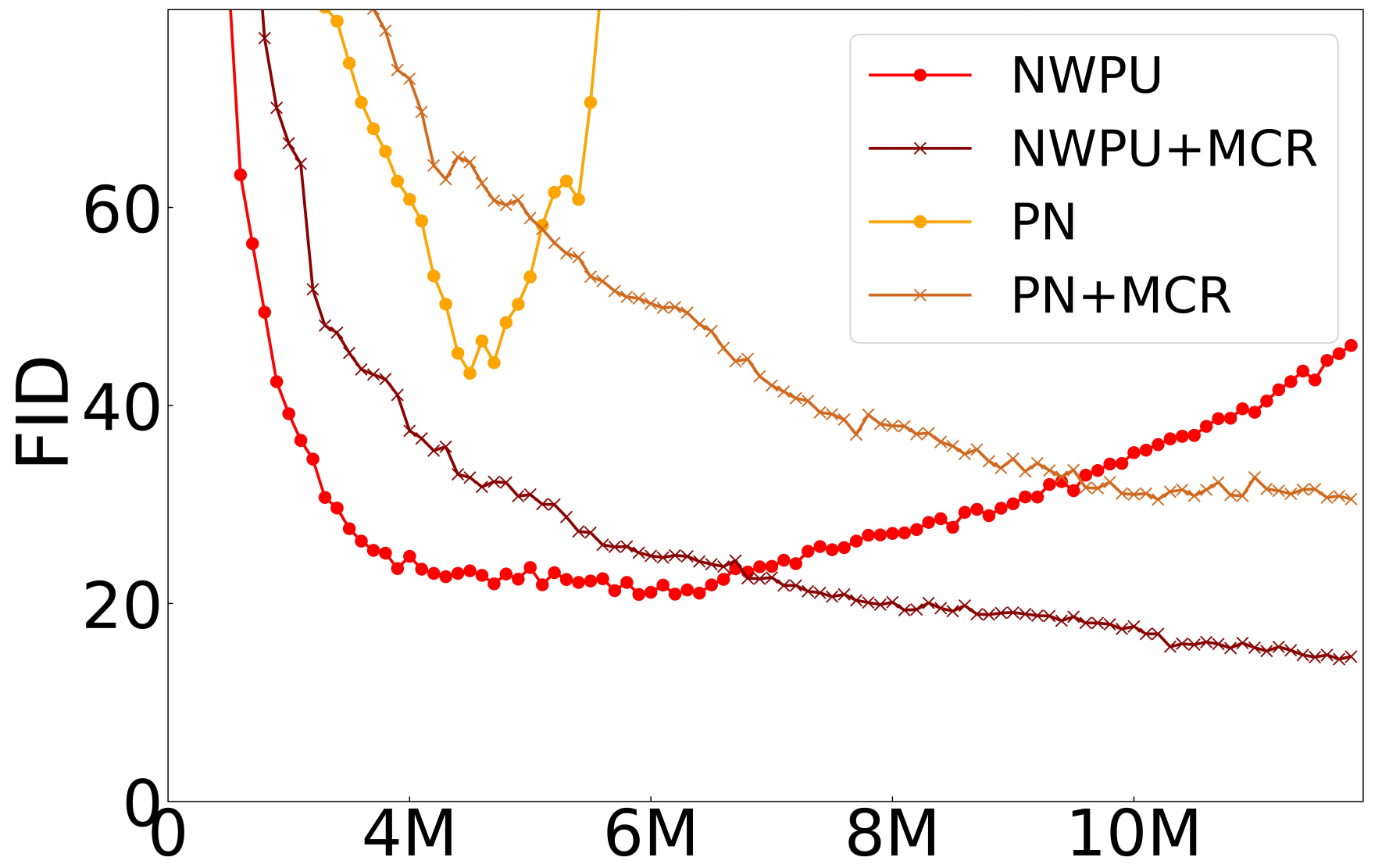}}\hspace{2pt}
\caption{(a) Training curves of BigGAN. (b) Training curves of StyleGAN2. Our method (MCR) not only reduces discriminator overfitting but also attains superior quality scores.}
\label{fig11}
\end{figure}

\begin{table}[htbp]
\centering
\caption{Comparison of Quality Scores (FID${\downarrow}$) on the UCLand, NWPU and PN Datasets}
\renewcommand\arraystretch{1.5}
\begin{tabular}{ccccc}
\toprule
\textbf{Methods} & \textbf{Backbone} & \textbf{UCLand} & \textbf{NWPU} & \textbf{PN} \\ \midrule
ADA              & StyleGAN2         & 74.25           & 11.97         & 33.53               \\
APA              & StyleGAN2         & 78.54           & 21.67         & 33.75               \\ \hline
AdaptiveMix      & StyleGAN2         & 70.95           & 13.53         & 34.63               \\
LeCam            & StyleGAN2         & 72.66           & 15.73         & 31.26               \\
InsGen           & StyleGAN2         & 95.65           & 10.92         & 50.76               \\ \hline
FastGAN          & --                & 76.63           & 32.17         & 51.39               \\
MoCA             & StyleGAN2         & 71.25           & 10.37         & 32.79               \\ \hline
MCR (Ours)             & StyleGAN2         & \textbf{69.43}  & \textbf{9.88} & \textbf{30.47}      \\ \bottomrule
\end{tabular}
\label{table1-1}
\end{table}

\textbf{Training Stability}. 
Fig.\ref{fig0-1} showcases the training process of our method MCR (based on StyleGAN2) on NWPU and PN datasets. Fig.\ref{fig0-1} (a) and (c) depict the FID curves during training, indicating no early divergence. Fig.\ref{fig0-1} (b) and (d) show the discriminator's accuracy on the validation set, consistently remaining high. These observations suggest that the proposed method MCR effectively mitigates the discriminator's overfitting problem.

\begin{table*}[ht!]
\centering
\caption{Comparison of Quality Scores (FID${\downarrow}$, KID($\times 10^{-3}$)${\downarrow}$) on the UCLand, NWPU and PN Datasets (the \textcolor{red}{red} numbers present our improvement)}
\renewcommand\arraystretch{1.5}
\begin{tabular}{ccccccc}
\toprule
\multirow{2}{*}{\textbf{Methods}} & \multicolumn{2}{c}{\textbf{UCLand}} & \multicolumn{2}{c}{\textbf{NWPU}} & \multicolumn{2}{c}{\textbf{PN}} \\ \cline{2-7} 
                                  & \textbf{FID}     & \textbf{KID}     & \textbf{FID}    & \textbf{KID}    & \textbf{FID}   & \textbf{KID}   \\ \midrule
\textbf{BigGAN+ADA}               & 98.09\textcolor{red}{-5.92}       & 52.31\textcolor{red}{-4.24}       & 30.91\textcolor{red}{-3.39}      & 11.13\textcolor{red}{-1.85}      & 63.94\textcolor{red}{-3.59}     & 36.71\textcolor{red}{-1.63}     \\
\textbf{StyleGAN2+ADA}            & 74.25\textcolor{red}{-4.22}       & 37.08\textcolor{red}{-4.08}       & 11.97\textcolor{red}{-1.88}      & 3.27\textcolor{red}{-0.92}       & 33.53\textcolor{red}{-2.96}     & 13.67\textcolor{red}{-2.32}     \\
\textbf{StyleGAN2+APA}            & 78.54\textcolor{red}{-4.39}       & 39.25\textcolor{red}{-3.79}       & 21.67\textcolor{red}{-2.78}      & 8.54\textcolor{red}{-2.03}       & 33.75\textcolor{red}{-4.84}     & 13.34\textcolor{red}{-2.99}     \\ \hline
\textbf{BigGAN+ADA+LeCam}         & 92.52\textcolor{red}{-5.27}       & 48.59\textcolor{red}{-3.52}       & 32.65\textcolor{red}{-2.57}      & 12.69\textcolor{red}{-1.87}      & 54.28\textcolor{red}{-4.42}     & 25.89\textcolor{red}{-2.37}     \\
\textbf{StyleGAN2+ADA+LeCam}      & 70.89\textcolor{red}{-3.91}       & 32.21\textcolor{red}{-2.86}       & 14.38\textcolor{red}{-2.34}      & 4.97\textcolor{red}{-1.47}       & 25.87\textcolor{red}{-2.86}     & 9.32\textcolor{red}{-1.73}      \\ \bottomrule
\end{tabular}
\label{table1}
\end{table*}

\textbf{Quantitative Comparison}.
Next, we present a quantitative comparison with established baselines. Fig.\ref{fig11} compares the training curves of our method on NWPU and PN datasets against BigGAN (Fig.\ref{fig11}(a)) and StyleGAN2 (Fig.\ref{fig11}(b)). Our method not only alleviates discriminator overfitting but also achieves superior quality scores. 

To further demonstrate the effectiveness of our proposed method MCR, we compare its performance against established methods for addressing discriminator overfitting. These methods typically fall into three categories: data augmentation, regularization, and architectural improvements. Although MCR falls under the regularization category, we compare it with representatives from all three categories. Table \ref{table1-1} summarizes the experimental results on the UCLand, NWPU, and PN datasets. Our method consistently outperforms all others: achieving a 2.14\% improvement in FID on UCLand compared to the well-established method. Similarly, on NWPU, our method achieves an FID of 9.88 compared to MoCA's 10.37, demonstrating MCR's clear advantage. Finally, on the PN dataset, our method surpasses LeCam by 2.53\%. It's important to note that these results are based on the same StyleGAN2 architecture. Overall, MCR outperforms the second-best method by approximately 3.13\% in terms of FID score. Table \ref{table1-1} clearly highlights the consistent superiority of our method in addressing the overfitting challenge.

\begin{figure*}[]
\centering
\includegraphics[width=0.9\textwidth]{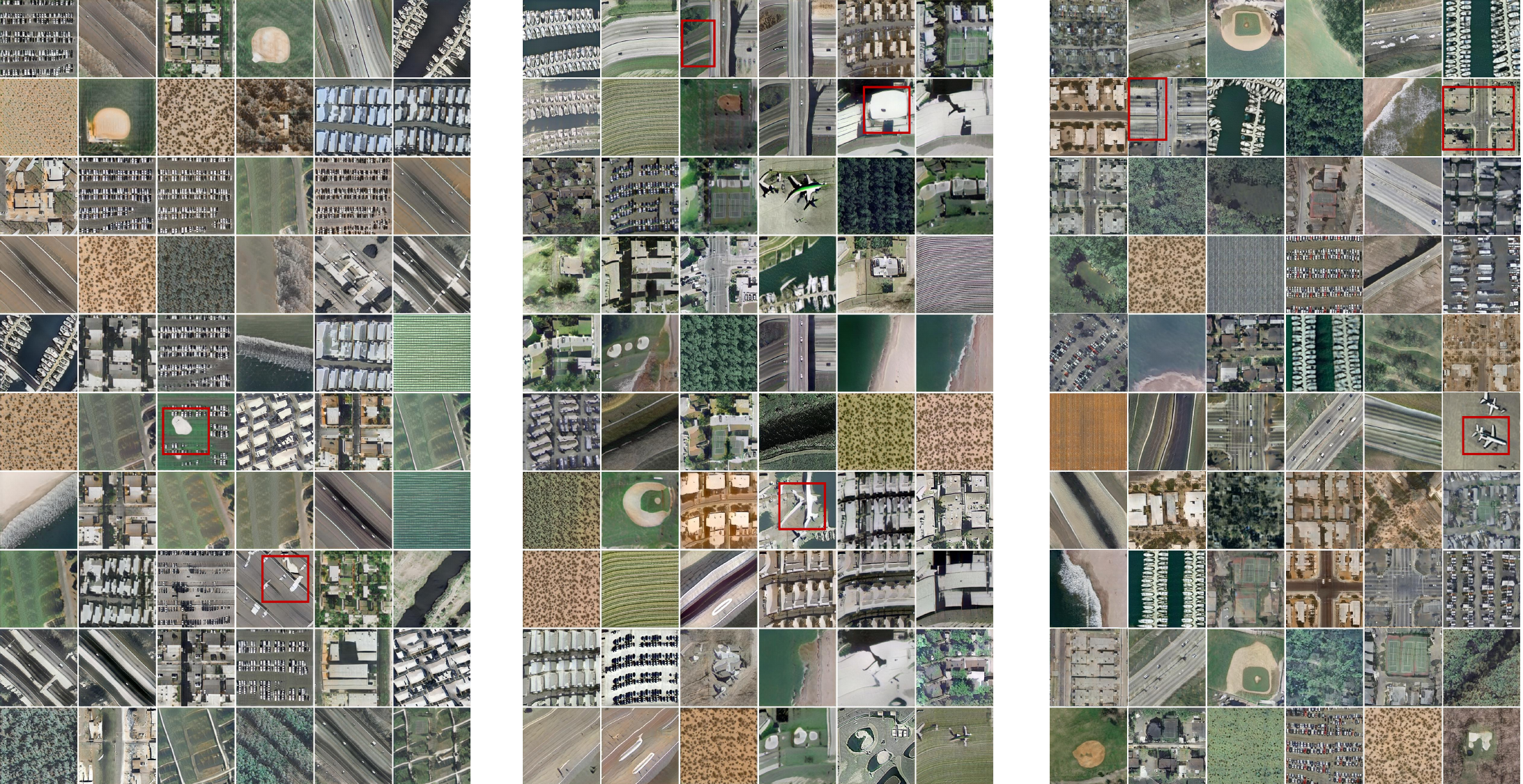}
\caption{Experiment results on the UCLand Dataset. The generated images of StyleGAN2 (\textbf{left}), StyleGAN2+AdaptiveMix (\textbf{middle}) and our method (\textbf{right}). The images generated by our method exhibit higher quality and diversity.}
\label{fig4}
\end{figure*}

\textbf{Versatility}. To verify the versatility of the method in this paper, we conduct experiments on different combinations of network architectures, enhancement methods and regularization methods. Experimental results are reported in Table \ref{table1}. The red numbers in Table \ref{table1} indicate the improvement of the GAN models after using our method. In general, the FID and KID scores for our proposed method indicate a significant and consistent advantage over all the compared methods. In detail, our comparison experiments can be divided into three types. First, under different model structures, BigGAN and StyleGAN2, our method is robust, which proves that our approach can be applied to any model architecture. Second, under different augmentation methods, ADA and APA, our method still performs well. This shows that the proposed approaches can be applied to other GAN models along with existing augmentation approaches. Third, combined with the regularization method LeCam, our method is still effective. The proposed method can be viewed as an effective complement to existing regularization methods.

\textbf{Qualitative Comparison}.
To validate the effectiveness of the methodologies presented in this paper, we conduct qualitative experiments on three RS datasets. For each dataset, we select a superior baseline method for benchmarking purposes. Specifically, on the UCLand dataset, our method is compared against StyleGAN2 and its extension, StyleGAN2+AdaptiveMix. On the NWPU dataset, our method is compared against StyleGAN2 and StyleGAN2+MoCA. Lastly, for the PN dataset, we evaluate our method against StyleGAN2 and StyleGAN2+LeCam. The generated images on the UCLand dataset are presented in in Fig. \ref{fig4}. 

We randomly select 60 samples from the generated images. The baseline StyleGAN2 model generates similar images, such as multiple parking lot images. In contrast, the imagery produced through our approach demonstrates a more uniform and varied distribution. Upon closer inspection, the images crafted using our method exhibit a higher degree of realism, with the shapes of objects appearing more regular and well-defined, which are marked in the red boxes. Further visual comparisons on the NWPU and PN datasets are presented in the Appendix.

\begin{table*}[htbp]
\centering
\caption{Comparison of Unsupervised Classification Accuracy(\%) on the UCLand Dataset and NWPU Dataset}
\renewcommand\arraystretch{1.5}
\begin{tabular}{ccccc}
\toprule
\textbf{\begin{tabular}[c]{@{}c@{}}Datasets\\ (training ratio)\end{tabular}} & \textbf{UCLand(80\%)} & \textbf{UCLand(50\%)} & \textbf{NWPU(80\%)} & \textbf{NWPU(20\%)} \\ \midrule
\textbf{MartaGAN \cite{MartaGAN}}                                                            & 94.86±0.80            & 85.51±0.69            & 75.43±0.28          & 75.03±0.28          \\
\textbf{AttentionGAN \cite{AttentionGAN}}                                                        & \textbf{97.69±0.6}    & 89.06±0.50            & —                   & 77.99±0.19          \\
\textbf{StyleGAN2}                                                           & 95.29±0.31            & 87.32±0.22            & 80.13±0.39          & 76.53±0.17          \\
\textbf{StyleGAN2+ADA}                                                       & 95.71±0.42            & 89.05±0.33            & 82.40±0.17          & 78.16±0.23          \\
\textbf{StyleGAN2+MCR (Ours)}                                                                & 97.33±0.39            & \textbf{92.72±0.22}   & \textbf{84.82±0.42} & \textbf{80.35±0.13} \\ \bottomrule
\end{tabular}
\label{table1-3}
\end{table*}

\begin{table*}[]
\centering
\caption{Comparison of Self-supervised Classification Accuracy(\%) on the UCLand Dataset and NWPU Dataset}
\renewcommand\arraystretch{1.5}
\begin{tabular}{cccccccc}
\toprule
\multirow{2}{*}{\textbf{Method}} & \multirow{2}{*}{\textbf{Backbone}} & \multicolumn{3}{c}{\textbf{UCLand}}                         & \multicolumn{3}{c}{\textbf{NWPU}}                           \\ \cline{3-8} 
                                 &                                    & \textbf{Original} & \textbf{StyleGAN2} & \textbf{MCR(Ours)} & \textbf{Original} & \textbf{StyleGAN2} & \textbf{MCR(Ours)} \\ \midrule
\multirow{2}{*}{\textbf{SimCLR}} & ResNet18                           & 67.59±0.38             & 69.61±0.92              & \textbf{70.48±0.88}     & 68.86±0.91             & 69.43±0.22              & \textbf{71.24±0.19}     \\
                                 & ResNet50                           & 69.39±0.67             & 70.25±0.28              & \textbf{71.80±0.35}     & 69.72±0.52             & 71.16±0.75              & \textbf{72.60±0.23}     \\ \hline
\multirow{2}{*}{\textbf{MAE}}    & ViT-base                           & 93.23±0.41             & 94.34±0.42              & \textbf{94.47±0.38}     & 89.84±0.28             & 90.21±0.19              & \textbf{90.29±0.53}     \\
                                 & Swin-T                             & 92.71±0.26             & \textbf{93.44±0.32}     & 93.33±0.17              & 90.21±0.43             & 89.87±0.59              & \textbf{91.03±0.22}     \\ \bottomrule
\end{tabular}
\label{table1-4}
\end{table*}

\subsection{Unsupervised Classification}

We employ the unsupervised classification task as a downstream experiment, aiming to both validate the effectiveness of MCR and to demonstrate its practical application in real-world tasks. We conduct two primary experiments. The first one operates at the feature level, utilizing the discriminator of the GAN as a feature extractor to perform unsupervised classification based on the extracted features. The second experiment operates at the image level, augmenting the original dataset with images generated by the GAN, and executing a self-supervised classification task based on this augmented dataset. We select the UCLand and NWPU datasets for testing the downstream task due to their inherent challenges for unsupervised classification; one dataset has the least amount of data while the other has the most categories.

In the first experiment, we use the learned representations $Z$ of the discriminator as features and apply a regularized linear L2-SVM classifier, adhering to the methodology used in prior studies \cite{MartaGAN}\cite{AttentionGAN}. The results, presented in Table \ref{table1-3}, underscore the superior performance of our method compared to StyleGAN2 and StyleGAN2+ADA across all tests. Our approach consistently outperforms in the majority of the experiments, affirming our method's ability to learn enhanced representations that facilitate precise data classification. Specifically, our method surpasses the baseline method by 4.16\% on the UCLand dataset. Similarly, on the NWPU dataset, our method outperforms the baseline method by 4.80\% and the second-best method by 2.62\%. These experiments further validate that by applying the manifold constraint, the discriminator can concentrate on the underlying data manifold and capture its essential characteristics.

In the second experiment, we utilize self-supervised algorithms SimCLR \cite{simclr} and MAE \cite{mae} as classification methods, choosing two distinct network architectures for each algorithm. All these models are initially pre-trained on the ImageNet dataset. We then augment the UCLand dataset with 1,050 generated images and the NWPU dataset with 9,000 generated images. These images are generated using traditional data augmentation, StyleGAN2, and our proposed method, respectively. Following this, we train the self-supervised models on these augmented RS datasets. These trained models are then used as feature extractors, and a regularized linear L2-SVM is employed for classification. As shown in Tables \ref{table1-4}, the algorithms trained on the dataset enhanced by our proposed methods consistently outperform those trained on datasets augmented by other approaches in terms of accuracy in most experiments. With an average classification accuracy improvement of 2.32\%, these results underscore the effectiveness of our proposed method in downstream tasks.

\subsection{Ablation Study}
\label{sec:ablation}
In this section, we provide further ablation and analysis over different components of our method.

\begin{table}[htbp]
\centering
\caption{Ablation Study on Relationship Matrix $C$}
\renewcommand\arraystretch{1.5}
\begin{tabular}{llll}
\hline
\toprule
\multicolumn{1}{c}{Methods} & \multicolumn{1}{c}{UCLand} & \multicolumn{1}{c}{NWPU} & PN \\ 
\midrule
Baseline                    & 74.25                      & 11.97                             & 33.53      \\
K-means (20)                    & 71.22                      & 11.31                             & 32.78      \\
K-means (40)                    & 72.28                      & 11.02                            & 31.33      \\ \hline
learnable (SimCLR)                    & \textbf{70.03}                      & 10.09                             & \textbf{30.57}      \\
learnable (CLIP)                    & 70.98                      & \textbf{9.89}                             & 30.71      \\
learnable (ResNet)                   & 70.81                      & 10.82                             & 30.78      \\
\bottomrule
\end{tabular}
\label{table4}
\end{table}

\begin{table}[htbp]
\centering
\caption{Ablation Study on Features from Different Blocks}
\renewcommand\arraystretch{1.2}
\begin{tabular}{lccccc}
\hline
\toprule
\textbf{Block} & 4    & 6    & 8    & 10  
  & 12    \\
\midrule
\textbf{FID}  & 14.92  & 14.05 &  \textbf{10.09} &  16.23  &  23.19  \\
\bottomrule
\end{tabular}
\label{table6-1}
\end{table}

\begin{table}[htbp]
\centering
\caption{Ablation Study on Regularizing Generator vs. Discriminator.}
\begin{tabular}{llll}
\hline
\toprule
\multicolumn{1}{c}{Methods} & \multicolumn{1}{c}{UCLand} & \multicolumn{1}{c}{NWPU} & PN \\ 
\midrule
Baseline                    & 74.25                      & 11.97                             & 33.53      \\
Only G                      & 73.95                      & 11.85                             & 33.06      \\
Only D                      & 71.23                      & 10.44                             & 31.83      \\
Ours                        & \textbf{70.03}                      & \textbf{10.09}                             & \textbf{30.57}      \\ 
\bottomrule
\end{tabular}
\label{table3}
\end{table}

\begin{table}[htbp]
\centering
\caption{Ablation Study on the Hyperparameter $\lambda$}
\begin{tabular}{llll}
\hline
\toprule
\multicolumn{1}{c}{{$\lambda$}} & \multicolumn{1}{c}{UCLand} & \multicolumn{1}{c}{NWPU} & PN \\ 
\midrule
0.1                                        & 80.60                      & 16.93                             & 40.09      \\
0.5                                        & 74.43                      & 12.91                             & 34.92      \\
0.7                                        & \textbf{69.26}                      & 10.68                             & 30.81      \\
1                                          & 69.43                      & \textbf{9.88}                             & \textbf{30.47}      \\ 
3                                        & 73.22                      & 12.31                             & 33.78      \\
\bottomrule
\end{tabular}
\label{table5}
\end{table}

\begin{table}[htbp]
\centering
\caption{Ablation Study on the Hyperparameter $\gamma$}
\begin{tabular}{llll}
\hline
\toprule
\multicolumn{1}{c}{{$\gamma$}} & \multicolumn{1}{c}{UCLand} & \multicolumn{1}{c}{NWPU} & PN \\ 
\midrule
0.1                                        & 78.29                      & 18.87                             & 41.13      \\
0.5                                        & 74.62                      & 13.16                             & 34.92      \\
0.7                                        & 71.26                      & 10.73                             & 31.92      \\
1                                          &  \textbf{69.43}                      & \textbf{9.88}                             & \textbf{30.47}      \\ 
3                                        & 73.24                      & 13.09                             & 35.07      \\
\bottomrule
\end{tabular}
\label{table6-2}
\end{table}

\textbf{Relationship Matrix $C$}. We employ various methods to construct the relationship matrix $C$. The first approach leverages SimCLR and K-means. We utilize a pre-trained SimCLR model to extract RS data features, followed by clustering using K-means. The centroids resulting from this clustering process serve as prototypes of the training data. In this experiment, we consider 20 and 40 clustering centroids, respectively. The second approach is a learnable matrix introduced in this paper. We evaluate it using pre-trained encoders such as SimCLR, CLIP \cite{clip}, and a ResNet50 model (pretrained on ImageNet). We utilize StyleGAN2+ADA as the baseline method. Table \ref{table4} presents the results of our ablation study on the UCLand, NWPU, and PN datasets. Notably, the choice of the pre-trained encoder has minimal impact on the results. In subsequent experiments, we opt for the learnable approach and the SimCLR model as the pre-trained encoder.

\textbf{Representations $Z$}. We conduct ablation studies on features from different network layers. As different network layers are related to different levels of details in the generated image, and the earlier blocks of the network impact the coarse structure or content of the image. We conduct experiments on NWPU dataset with StyleGAN2+ADA model, which consists of 14 blocks. We choose 5 blocks for comparison, and the results are shown in Table \ref{table6-1}. We empirically choose the outputs of 8th block as the representations $Z$.

\textbf{Regularizing generator vs. discriminator}. Our default method add regularization on the loss functions of both generator $G$ and discriminator $D$. In this experiment, we investigate the effectiveness of separately regularizing $G$ and $D$. We utilize StyleGAN2+ADA as the baseline method. Table \ref{table3} presents the results of the ablation study on UCLand, NWPU and PN datasets. The No Regularization version yields poor results as expected. Adding the regularization method on $D$ already brings significant improvement to the model under different datasets. As proposed in our final method, adding the regularization on both $G$ and $D$ achieves the best results.

\textbf{Hyperparameters}. We conduct the ablation study on the hyperparameters ${\lambda}$ and ${\gamma}$ using StyleGAN2+MCR on UCLand, NWPU and PN datasets. The FID scores are shown in Table \ref{table5} and Table \ref{table6-2}. Based on the experiment results, we set ${\lambda}=1$ and ${\gamma}=1$ in the following experiments.

\section{Conclusion}
\label{sec:con}

In this study, we aim to address the challenges posed by RS images in the context of GANs. We observe that RS images exhibit higher intrinsic dimensions compared to natural images, resulting in difficulties for the discriminator and an increased risk of overfitting. To mitigate these issues, we introduce a novel measure to capture the real data manifold and propose the MCR method to effectively address the discriminator overfitting while enhancing the generator’s performance. We also present innovative learning paradigms for the unsupervised generation of RS images. Our method’s effectiveness is confirmed through theoretical analysis and comprehensive experiments on three RS datasets using different GAN models. This demonstrates the adaptability and efficiency of our approach.

As for future works, there are several intriguing avenues for further research. Further exploration into the application of the MCR method to other types of generative models could yield interesting insights. Another valuable direction would be investigating the impact of different RS image characteristics, such as varying resolutions or spectral ranges, on GAN performance. This could provide crucial information to refine our approach further. By pursuing these lines of inquiry, we aim to continue enhancing the capabilities of GANs in handling the intricacies of RS images.

\bibliographystyle{IEEEtran}
\bibliography{library_abbreviated}


\end{document}